\documentclass{article}

\usepackage{PRIMEarxiv}

\usepackage[utf8]{inputenc} 
\usepackage[T1]{fontenc}    
\usepackage{hyperref}       
\usepackage{url}            
\usepackage{booktabs}       
\usepackage{amsfonts}       
\usepackage{nicefrac}       
\usepackage{microtype}      
\usepackage{fancyhdr}       
\usepackage{graphicx}       
\graphicspath{{figures/}}   

\usepackage{amsmath} 
\usepackage{amsthm}
\usepackage{amssymb}
\usepackage{longtable}
\usepackage{mathtools}
\usepackage{pdflscape}
\newtheorem{theorem}{Theorem}[section]
\newtheorem{corollary}{Corollary}[theorem]
\newtheorem{lemma}[theorem]{Lemma}
\newtheorem{proposition}[theorem]{Proposition}
\newtheorem{definition}{Definition}

\newtheorem{remark}{Remark}
\usepackage{algorithm}
\usepackage{algpseudocode}
\usepackage{natbib}

\title{Bayes-Sufficient Representations in Supervised Learning
}

\author{
  Vasileios Sevetlidis\\
  Athena Research Center, Kimmeria Campus, Xanthi, Greece \\
  Democritus University of Thrace, Vas. Sofias Campus, Xanthi, Greece \\
  International Hellenic University, Serres, Greece\\
  \texttt{vasiseve@athenarc.gr} \\
}

\begin{document}
\maketitle

\begin{abstract}
Representation learning is often described as preserving the information in an input that is relevant for prediction. This work asks what relevance means for a fixed supervised decision problem. A representation is defined to be Bayes-sufficient for a joint distribution and loss if some prediction head can use it to implement a Bayes-optimal action rule. This makes the target information loss-dependent. In the almost-surely unique Bayes-action case, the relevant object is a Bayes quotient, which identifies inputs that require the same Bayes-optimal action. A representation is sufficient when it refines this quotient, and Bayes-minimal when it is informationally equivalent to it. The framework connects naturally to property elicitation: zero-one loss requires the Bayes class, squared loss the conditional mean, Brier loss the conditional probability in binary prediction, and log loss or strictly proper scoring rules the predictive distribution. Controlled finite experiments, learned neural bottleneck experiments, and a real-data iNaturalist taxonomic refinement experiment illustrate the distinction between sufficiency, minimality, and retained non-required information. For a fixed supervised problem, the distribution and the loss determine the Bayes action, the Bayes action determines the quotient, and the quotient determines the minimal information required for Bayes-optimal prediction.
\end{abstract}

\keywords{representation learning \and Bayes sufficiency \and decision theory \and property elicitation \and minimal representations
}

\section{Introduction}
\label{sec:introduction}

Representation learning is often described as the problem of retaining the information in an input that is relevant for prediction while discarding nuisance variation. This description is appealing, but it leaves open a basic question: relevant for which prediction problem? Relevance is not a property of the input distribution alone. It is determined jointly by the data-generating law and the loss under which predictions are evaluated.

A simple example illustrates the issue. In binary classification, two inputs may have conditional probabilities $P(Y=1\mid X=x)$ equal to $0.55$ and $0.95$. Under zero-one loss, both inputs require the same Bayes-optimal class decision, so a representation that records only the Bayes class is sufficient for optimal classification. Under log loss or another calibrated probabilistic loss, however, the two inputs cannot be identified; the probabilities themselves are the quantities to be reported. The same conditional law can therefore induce a coarse representation target for classification and a finer one for probabilistic prediction.

This paper formalizes this loss-dependence through the notion of a Bayes-sufficient representation. For a fixed joint distribution and supervised loss, a representation is Bayes-sufficient if some prediction head can use it to implement a Bayes-optimal action rule. In the common case where the Bayes action is almost surely unique, this definition yields a canonical loss-dependent object: the Bayes quotient. This quotient identifies inputs that require the same Bayes-optimal action. A representation is sufficient exactly when it refines the Bayes quotient, and it is Bayes-minimal when it is informationally equivalent to the quotient.

The resulting framework separates two notions that are often conflated. A representation may be sufficient because it contains all information needed for Bayes-optimal prediction, while still being nonminimal because it also retains information that the loss does not require. For example, under zero-one classification the relevant quotient is generated by the Bayes class, whereas under squared loss it is generated by the conditional mean, under binary Brier loss by the conditional probability, and under log loss or strictly proper scoring rules by the predictive distribution. These are different notions of prediction-relevant information, even when the underlying joint distribution is the same.

The paper makes three contributions. First, it gives a decision-theoretic definition of Bayes sufficiency for supervised representations and proves a factorization characterization: a representation is Bayes-sufficient exactly when at least one Bayes predictor is measurable with respect to the representation. Second, in the unique Bayes-action case, it identifies the Bayes quotient and characterizes sufficiency and minimality as sigma-algebra containment and equality. The set-valued, nonunique-action case is also treated, where sufficiency requires a common optimal action on representation fibers rather than equality of Bayes-action sets. Third, it connects the quotient view to property elicitation, showing that losses induce representation targets through the conditional properties they elicit.

The rest of the paper is organized as follows. Section~\ref{sec:related-work} situates Bayes sufficiency relative to statistical decision theory, classical sufficiency, Blackwell comparison, sufficient dimension reduction, property elicitation, information-bottleneck methods, and probing. Section~\ref{sec:theory} contains the main theoretical development and is the central section of the paper: it defines Bayes sufficiency, introduces the Bayes quotient in the unique-action case, distinguishes sufficiency from Bayes minimality, and derives the representation targets induced by standard supervised losses. Section~\ref{sec:experiments} provides empirical illustrations. Section~\ref{subsec:exp-synthetic} studies controlled synthetic settings where the relevant quotients are known exactly and then examines learned neural bottleneck representations. Section~\ref{subsec:exp-inaturalist} gives a real-data refinement experiment on iNaturalist, using the species--genus--family hierarchy as a structured analogue of quotient refinement. Section~\ref{sec:discussion} discusses limitations and extensions, and Section~\ref{sec:conclusion} concludes. The full measure-theoretic formulation, including technical conditions, proofs, the nonunique-action case, and the standard-loss derivations, is given in Appendix~\ref{app:bayes-quotient-theory}.

\section{Related work}
\label{sec:related-work}

The starting point is classical statistical decision theory. In the Wald--Bayes formulation, optimality is not a property of a distribution alone, but of a decision problem: an action space, a loss, and a probability law together determine the Bayes action \citep{Wald1950,Ferguson1967,Berger1985}. This elementary point is central for representation learning. If a representation is meant to preserve the information relevant for prediction, the word relevant cannot be interpreted independently of the loss. The same conditional law of $Y$ given $X$ may induce a coarse optimal action under one loss and a finer optimal action under another. Task-relevant information is loss-relative.

This notion of Bayes sufficiency differs fundamentally from classical statistical sufficiency. In the Fisher--Neyman sense, a statistic is sufficient for a parameterized model when it preserves the information in the sample about the unknown parameter, equivalently through the factorization criterion under suitable domination assumptions \citep{Fisher1922,Neyman1935,LehmannScheffe1950}. The present setting is not primarily about preserving information about a parameter or model family. The joint law $P$ is fixed, and the question is instead which information about the input must be retained so that a supervised Bayes action can still be implemented. Blackwell sufficiency is closer in decision-theoretic spirit, since it compares experiments by their value for decision making \citep{Blackwell1951,Blackwell1953}. However, Blackwell's order is deliberately uniform over classes of decision problems: one experiment is more informative than another if it can do at least as well across relevant losses, priors, and actions. The present work fixes a single supervised decision problem and extracts the induced representation quotient. It is narrower than Blackwell comparison, but aims at identifying the information demanded by a specified loss.

The work is also adjacent to sufficient dimension reduction in regression. SDR asks for low-dimensional summaries of predictors that preserve aspects of the conditional relationship between $X$ and $Y$, often through conditions such as $Y \perp X \mid B^\top X$ or through central subspaces for the conditional distribution or conditional mean \citep{Li1991,Cook1998,CookLi2002,FukumizuBachJordan2009}. These works are concerned with dimension reduction, estimation of subspaces, and regression structure. By contrast, the object here need not be a linear subspace or low-dimensional Euclidean coordinate. The Bayes quotient is an informational object. The sigma-algebra generated by the Bayes action, or more generally the common-action structure induced by the loss. SDR asks for summaries that preserve conditional structure; Bayes sufficiency asks which conditional property is required for optimal action under the chosen supervised loss.

The closest formal language for loss-dependent targets comes from property elicitation. A loss elicits a property of a distribution when its expected loss is uniquely minimized by reporting that property. Squared loss elicits the mean, absolute loss elicits a median, quantile losses elicit quantiles, Brier-type losses elicit probabilities, and strictly proper scoring rules elicit predictive distributions \citep{Savage1971,GneitingRaftery2007,LambertPennockShoham2008,FrongilloKash2015}. In supervised learning, the same statement applies conditionally: the loss determines which property of $P(Y\in\cdot\mid X)$ is optimal to report at each input. The present paper uses this observation at the representation level. Once a loss elicits a conditional property, the Bayes quotient is the input-level quotient generated by that property. Bayes-sufficient representations are precisely those from which the elicited conditional property, or at least an optimal action, can be recovered.

This perspective clarifies a recurring theme in representation learning. Methods such as the information bottleneck frame representation learning as preserving information about a relevance variable while compressing information about the input \citep{TishbyPereiraBialek1999}. Subsequent work connects minimality, invariance, nuisance removal, and deep representations \citep{TishbyZaslavsky2015,AchilleSoatto2018,AchilleSoatto2018InformationDropout}. These approaches motivate the idea that good representations should keep task-relevant information and discard nuisance variation. The present paper makes the supervised target of relevance explicit: for a fixed joint law and loss, the Bayes information is the part of the input information demanded by the decision problem. A representation can contain this information while also carrying extra information not used by the Bayes action.

The empirical methodology should be read with the same distinction in mind. Probing studies ask whether a variable is recoverable from a learned representation by a chosen class of probes, often linear probes \citep{AlainBengio2017,HewittLiang2019,Belinkov2021}. Such probes are valuable diagnostics, but they do not by themselves establish population sigma-algebra inclusion. A successful probe shows that the target variable is accessible to the probe class under the sample and optimization procedure used; a failed probe may indicate absence of information, insufficient probe capacity, finite-sample limitations, or optimization failure. For this reason, the experiments in this paper use probes as empirical recoverability tests rather than as definitions of sufficiency. Population Bayes sufficiency is a factorization property; probing is an operational diagnostic for learned representations.

Finally, this paper complements recent representation-level non-identifiability and invariance perspectives \citep{Sevetlidis2026Fiber,SevetlidisPavlidis2026Holonomy}. In particular, \citet{Sevetlidis2026Fiber} emphasizes that predictor behavior alone need not determine representation-level properties, since unused coordinates can be added or transformed without changing the composite predictor. Here the positive decision-theoretic question is: among all information that a representation could retain, which information is required by a specified supervised loss? The answer comes through the Bayes quotient. The fiber perspective explains why representation properties cannot be inferred from predictor behavior alone; Bayes sufficiency identifies the loss-dependent information that must be present for optimal supervised action.

\section{Theory}
\label{sec:theory}

This section characterizes the information a representation must retain to support Bayes-optimal prediction for a fixed supervised decision problem. The joint law of $(X,Y)$ and the loss determine the Bayes action, and hence the loss-dependent information required for optimal prediction. Writing predictors as $c\circ h$, a representation $h$ is Bayes-sufficient when some head $c$ can use it to implement a Bayes-optimal rule \citet{Sevetlidis2026Fiber}. When the Bayes action is almost surely unique, this information is captured by the Bayes quotient, which identifies inputs requiring the same Bayes-optimal action. In this case, sufficiency means refinement of the quotient, while Bayes minimality means informational equivalence to it. The Appendix~\ref{app:bayes-quotient-theory} gives the full measure-theoretic formulation, including the nonunique-action case.

\subsection{Bayes sufficiency}
\label{subsec:bayes-sufficiency-main}

Let $(X,Y)\sim P$ on standard Borel spaces $\mathcal X\times\mathcal Y$, and let $\ell:\mathsf A\times\mathcal Y\to[0,\infty]$ be a measurable loss with action space $\mathsf A$. A predictor is a measurable map $f:\mathcal X\to\mathsf A$, and a representation is a measurable map $h:\mathcal X\to\mathcal H$. Write $H=h(X)$. A head on the representation is a measurable map $c:\mathcal H\to\mathsf A$, giving the predictor $c\circ h$.

Let
\[
\mu_x(\cdot):=P(Y\in\cdot\mid X=x)
\]
be a regular conditional law. The conditional risk of action $a$ at input value $x$ is
\[
L_{\ell,P}(a\mid x):=\int_{\mathcal Y}\ell(a,y)\,\mu_x(dy),
\]
and the Bayes action correspondence is
\[
\Gamma_{\ell,P}(x):=\arg\min_{a\in\mathsf A}L_{\ell,P}(a\mid x).
\]
Throughout, Bayes actions are assumed to exist almost surely and the usual measurability conditions are assumed to hold; the precise conditions are collected in Appendix~\ref{app:technical-setup}.

\begin{definition}[Bayes sufficiency]
\label{def:bayes-sufficiency-main}
A representation $H=h(X)$ is \emph{Bayes-sufficient} for $(P,\ell)$ if there exists a measurable head $c:\mathcal H\to\mathsf A$ such that
\[
c(h(X))\in\Gamma_{\ell,P}(X)
\qquad P_X\text{-a.s.}
\]
Equivalently, at least one Bayes-optimal decision rule can be implemented from the representation.
\end{definition}

The definition is existential. In particular, when Bayes actions are nonunique, Bayes sufficiency asks for enough information to choose one Bayes-optimal action almost surely. The entire correspondence $x\mapsto\Gamma_{\ell,P}(x)$ may contain additional optimal actions that the representation leaves unspecified. For example, if
\[
\Gamma_{\ell,P}(x_1)=\{a,b\},
\qquad
\Gamma_{\ell,P}(x_2)=\{a\},
\]
then a representation may merge $x_1$ and $x_2$ and still support Bayes optimality, since a head can choose the common action $a$. A formal fiberwise characterization of this set-valued case is given in Appendix~\ref{app:nonunique-actions}.

The following elementary factorization principle is used throughout. Its proof is included in Appendix~\ref{app:factorization-proof}.

\begin{lemma}[Bayes sufficiency as factorization]
\label{lem:bayes-sufficiency-factorization-main}
A representation $H=h(X)$ is Bayes-sufficient for $(P,\ell)$ if and only if at least one Bayes predictor is $\sigma(H)$-measurable. Equivalently, there exist a Bayes predictor $f^\star$ and a measurable head $c:\mathcal H\to\mathsf A$ such that
\[
f^\star(X)=c(H)
\qquad P_X\text{-a.s.}
\]
\end{lemma}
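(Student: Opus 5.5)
The proof is a direct unpacking of Definition~\ref{def:bayes-sufficiency-main} together with the Doob--Dynkin factorization lemma. First fix what ``Bayes predictor'' means: a measurable $f^\star:\mathcal X\to\mathsf A$ with $f^\star(X)\in\Gamma_{\ell,P}(X)$ $P_X$-a.s. We also take ``$\sigma(H)$-measurable'' in the a.s. sense, i.e.\ agreeing $P_X$-a.s.\ with a $\sigma(h)$-measurable map. The argument then splits into two implications and a check that the two reformulations in the statement coincide.

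\emph{Sufficiency implies factorization.} Let $c$ be the head given by Bayes sufficiency and set $f^\star:=c\circ h$. This map is measurable as a composition of measurable maps, and it is $\sigma(h)$-measurable because $(c\circ h)^{-1}(B)=h^{-1}(c^{-1}(B))$. By the defining property of $c$, $f^\star(X)\in\Gamma_{\ell,P}(X)$ a.s., so $f^\star$ is a Bayes predictor. Moreover $f^\star(X)=c(H)$ holds identically. This gives both the $\sigma(H)$-measurability claim and the explicit factorization.

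\emph{Factorization implies sufficiency.} Suppose first that $f^\star$ is a Bayes predictor with $f^\star(X)=c(H)$ a.s.\ for some measurable head $c$. Then $c(h(X))=f^\star(X)\in\Gamma_{\ell,P}(X)$ a.s., which is exactly Definition~\ref{def:bayes-sufficiency-main}. It remains to show that mere $\sigma(H)$-measurability of some Bayes predictor yields such a head. This is the one step with actual content. If $f^\star$ is $\sigma(h)$-measurable, the Doob--Dynkin lemma gives a measurable $c:\mathcal H\to\mathsf A$ with $f^\star=c\circ h$. The lemma requires the target space $\mathsf A$ to be standard Borel, or at least a measurable space for which functional representation holds. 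If $f^\star$ is only a.s.\ equal to a $\sigma(h)$-measurable map $g$, apply Doob--Dynkin to $g$. Since the Bayes property is an a.s.\ statement, $g$ is again a Bayes predictor, so no generality is lost.

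The main obstacle is therefore the hypothesis on $\mathsf A$ needed for Doob--Dynkin. I would invoke the standing assumptions of Appendix~\ref{app:technical-setup} that $\mathsf A$ is standard Borel; $\mathcal H$ needs no assumption. The proof then proceeds in two steps. Embed $\mathsf A$ Borel-isomorphically into $[0,1]$. Then use the real-valued Doob--Dynkin lemma, built by approximation with $\sigma(h)$-simple functions. If the resulting $c$ takes values outside the image of $\mathsf A$ on a set $N$, this set is $h$-null for $P_X$ because $f^\star$ takes values in $\mathsf A$. On $N$ we redefine $c$ to equal a fixed action $a_0\in\mathsf A$. This keeps $c$ measurable, since the image of a standard Borel space under a Borel isomorphism onto its image is a Borel set, and it changes $c\circ h$ only on a null set.
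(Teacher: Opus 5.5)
Your proof is correct and follows the paper's argument: the paper likewise takes $f^\star=c\circ h$ in the forward direction, reads sufficiency off directly from an a.s.\ factorization, and invokes the Doob--Dynkin lemma to turn a $\sigma(H)$-measurable Bayes predictor into a head. Your extra steps only spell out the Doob--Dynkin step that the paper cites: you embed $\mathsf A$ in $[0,1]$, redefine the head on a null set, and note that $\mathcal H$ need not be standard Borel (the paper assumes both $\mathcal H$ and $\mathsf A$ are).
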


Thus Bayes sufficiency asserts exactly that $H$ contains enough information to implement one Bayes-optimal action rule for the chosen loss.

\subsection{The Bayes quotient under unique Bayes actions}
\label{subsec:unique-bayes-quotient-main}

Now specialize to the common case in which the Bayes action is unique almost surely. Assume that there exists a measurable map
\[
a^\star_{\ell,P}:\mathcal X\to\mathsf A
\]
such that
\[
\Gamma_{\ell,P}(X)=\{a^\star_{\ell,P}(X)\}
\qquad P_X\text{-a.s.}
\]
Then every Bayes predictor agrees almost surely with $a^\star_{\ell,P}$. The random variable $a^\star_{\ell,P}(X)$ is the Bayes-relevant decision statistic for the loss.

\begin{definition}[Bayes quotient and Bayes information]
\label{def:bayes-quotient-main}
In the unique-action case, the \emph{Bayes information} of $(P,\ell)$ is
\[
\mathcal I_{\ell,P}:=\sigma(a^\star_{\ell,P}(X)).
\]
A \emph{Bayes quotient} is any random variable $Q_{\ell,P}=q_{\ell,P}(X)$ satisfying
\[
\sigma(Q_{\ell,P})=\mathcal I_{\ell,P}
\qquad \text{mod }P_X.
\]
\end{definition}

The quotient is meant informationally. Inputs are identified when they induce the same Bayes action. The formal object is the sigma-algebra generated by that action, which avoids choosing a particular quotient space and makes the construction invariant under measurable reparameterizations of the Bayes act.

\begin{theorem}[Bayes quotient characterization]
\label{thm:bayes-quotient-main}
Assume the Bayes action is unique $P_X$-almost surely and is represented by $a^\star_{\ell,P}$. Let $H=h(X)$. Then the following statements are equivalent.
\begin{enumerate}
    \item $H$ is Bayes-sufficient for $(P,\ell)$;
    \item there exists a measurable head $c:\mathcal H\to\mathsf A$ such that
    \[
    a^\star_{\ell,P}(X)=c(H)
    \qquad P_X\text{-a.s.};
    \]
    \item $a^\star_{\ell,P}(X)$ is $\sigma(H)$-measurable;
    \item the Bayes information is contained in the representation information,
    \[
    \mathcal I_{\ell,P}\subseteq\sigma(H)
    \qquad \text{mod }P_X.
    \]
\end{enumerate}
Consequently, in the unique-action case, Bayes-sufficient representations are exactly refinements of the Bayes quotient.
\end{theorem}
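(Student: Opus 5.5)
The plan is to prove the cycle $(1)\Rightarrow(2)\Rightarrow(3)\Rightarrow(4)\Rightarrow(1)$. Two ingredients will be used: Lemma~\ref{lem:bayes-sufficiency-factorization-main}, and the Doob--Dynkin factorization lemma in its almost-sure form. For the latter I take $\mathsf A$ to be standard Borel, which I expect is among the measurability conditions of Appendix~\ref{app:technical-setup}.

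For $(1)\Rightarrow(2)$: by Definition~\ref{def:bayes-sufficiency-main} there is a measurable head $c$ with $c(H)\in\Gamma_{\ell,P}(X)$ $P_X$-a.s. Uniqueness gives $\Gamma_{\ell,P}(X)=\{a^\star_{\ell,P}(X)\}$ a.s. Intersecting the two full-measure events yields $c(H)=a^\star_{\ell,P}(X)$ a.s.

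For $(2)\Rightarrow(3)$: $c(H)$ is $\sigma(H)$-measurable as a composition of measurable maps. So $a^\star_{\ell,P}(X)$ agrees a.s. with a $\sigma(H)$-measurable variable. This is $\sigma(H)$-measurability modulo null sets, i.e.\ with respect to the $P_X$-completion of $\sigma(H)$. I will state this convention explicitly, since it is the one the ``mod $P_X$'' in item~4 intends.

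For $(3)\Rightarrow(4)$: if $a^\star_{\ell,P}(X)$ is (a.s.\ equal to) a $\sigma(H)$-measurable variable, then every generator $\{a^\star_{\ell,P}(X)\in B\}$ of $\mathcal I_{\ell,P}$ differs from a set in $\sigma(H)$ by a null set. Hence $\mathcal I_{\ell,P}\subseteq\sigma(H)$ mod $P_X$.

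For $(4)\Rightarrow(1)$, the main obstacle, I will argue as follows.
\begin{itemize}
\item From the inclusion, $a^\star_{\ell,P}(X)$ is measurable with respect to the completion $\overline{\sigma(H)}$. A standard argument, approximating through a countable generating family of the standard Borel space $\mathsf A$, then produces a genuinely $\sigma(H)$-measurable $\mathsf A$-valued variable $Z$ with $Z=a^\star_{\ell,P}(X)$ a.s.
\item The Doob--Dynkin lemma gives a measurable $c:\mathcal H\to\mathsf A$ with $Z=c(H)$. This step requires $\mathsf A$ standard Borel.
\item Then $c(H)=a^\star_{\ell,P}(X)\in\Gamma_{\ell,P}(X)$ a.s., which is~(1).
\end{itemize}
Alternatively, once (2) is reached one can close the loop via Lemma~\ref{lem:bayes-sufficiency-factorization-main}, since $a^\star_{\ell,P}$ is itself a Bayes predictor.

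The ``consequently'' clause is then a restatement. Choose any Bayes quotient $Q_{\ell,P}$ with $\sigma(Q_{\ell,P})=\mathcal I_{\ell,P}$ mod $P_X$. Item~(4) says exactly that $\sigma(Q_{\ell,P})\subseteq\sigma(H)$ mod $P_X$, i.e.\ $H$ refines the quotient. By the same Doob--Dynkin argument this is equivalent to $Q_{\ell,P}=g(H)$ a.s.\ for some measurable $g$, provided the quotient takes values in a standard Borel space.
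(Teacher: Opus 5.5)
Your proof is correct and follows the paper's route: uniqueness forces any Bayes-optimal head output to equal $a^\star_{\ell,P}(X)$ almost surely, and Doob--Dynkin for the standard Borel action space turns $\sigma(H)$-measurability into a measurable head. The only added content is your explicit passage from completion-measurability to a genuinely $\sigma(H)$-measurable version; the paper absorbs this into its convention that inclusion mod $P_X$ means the variable admits a $\sigma(H)$-measurable version, which makes $(3)\Leftrightarrow(4)$ definitional.
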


\begin{proof}
By Lemma~\ref{lem:bayes-sufficiency-factorization-main}, Bayes sufficiency is equivalent to the existence of a Bayes predictor that factors through $H$. Under uniqueness, every Bayes predictor agrees almost surely with $a^\star_{\ell,P}$. Therefore Bayes sufficiency is equivalent to the existence of a measurable $c:\mathcal H\to\mathsf A$ with $a^\star_{\ell,P}(X)=c(H)$ almost surely. By the Doob--Dynkin factorization lemma for standard Borel spaces, this is equivalent to $a^\star_{\ell,P}(X)$ being $\sigma(H)$-measurable, which is exactly the inclusion $\mathcal I_{\ell,P}\subseteq\sigma(H)$ modulo $P_X$.
\end{proof}

A pointwise intuition is useful. Up to null sets, a Bayes-sufficient representation may split Bayes-equivalence classes and must keep distinct input regions that require different unique Bayes actions. The representation $H$ may refine the Bayes quotient; sufficiency enforces retention of the quotient itself.

\subsection{Bayes minimality}
\label{subsec:bayes-minimality-main}

The quotient characterization separates sufficiency from minimality. Sufficiency requires that the representation contain the Bayes information. Minimality requires informational equivalence with the Bayes information.

\begin{definition}[Bayes minimality]
\label{def:bayes-minimality-main}
In the unique-action setting, a representation $H=h(X)$ is \emph{Bayes-minimal} for $(P,\ell)$ if
\[
\sigma(H)=\mathcal I_{\ell,P}
\qquad \text{mod }P_X.
\]
Equivalently, $H$ is informationally equivalent to the Bayes quotient $Q_{\ell,P}$.
\end{definition}

Thus
\[
H\text{ Bayes-sufficient}
\quad\Longleftrightarrow\quad
\sigma(Q_{\ell,P})\subseteq\sigma(H),
\]
whereas
\[
H\text{ Bayes-minimal}
\quad\Longleftrightarrow\quad
\sigma(H)=\sigma(Q_{\ell,P}).
\]
Bayes-minimal representations are therefore unique up to information equivalence. Any two of them generate the same sigma-algebra modulo $P_X$.

This distinction matters because sufficiency is preserved under refinement, and minimality imposes equality with the quotient. If $H$ is Bayes-sufficient and $G=g(X)$ satisfies $\sigma(H)\subseteq\sigma(G)$ modulo $P_X$, then $G$ is also Bayes-sufficient. If $H$ is Bayes-minimal and is augmented by an additional variable $V=v(X)$ outside the Bayes quotient, then $(H,V)$ remains Bayes-sufficient and becomes a nonminimal refinement. This is the Bayes-quotient version of the broader factorization phenomenon. Unused representation information may be retained while predictive capability is preserved.

\subsection{Connection with elicited conditional properties}
\label{subsec:elicitation-main}

Many supervised losses are designed to elicit a property of the conditional law of $Y$ given $X$. The Bayes quotient is then the input quotient generated by that conditional property.

Let $\mathcal P_0\subseteq\mathcal P(\mathcal Y)$ be a class of probability measures, and let
\[
T:\mathcal P_0\to\mathcal T
\]
be a measurable property. Suppose the action space is $\mathsf A=\mathcal T$. A loss $\ell:\mathcal T\times\mathcal Y\to[0,\infty]$ uniquely elicits $T$ on $\mathcal P_0$ if, for every $\mu\in\mathcal P_0$,
\[
\arg\min_{t\in\mathcal T}\int \ell(t,y)\,\mu(dy)=\{T(\mu)\}.
\]

\begin{proposition}[Elicited properties induce Bayes quotients]
\label{prop:elicited-quotient-main}
Assume that $P(Y\in\cdot\mid X)\in\mathcal P_0$ almost surely and that $T(P(Y\in\cdot\mid X))$ is measurable. If $\ell$ uniquely elicits $T$ on $\mathcal P_0$, then
\[
a^\star_{\ell,P}(X)=T(P(Y\in\cdot\mid X))
\]
almost surely. Consequently,
\[
H\text{ is Bayes-sufficient}
\quad\Longleftrightarrow\quad
\sigma\bigl(T(P(Y\in\cdot\mid X))\bigr)\subseteq\sigma(H)
\quad \text{mod }P_X,
\]
and $H$ is Bayes-minimal if and only if equality holds.
\end{proposition}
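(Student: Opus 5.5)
The plan is to reduce the proposition to Theorem~\ref{thm:bayes-quotient-main}: first show that unique elicitation makes the Bayes action almost surely unique and equal to the elicited property. The pointwise identification comes first. Fix a version $\mu_x$ of the regular conditional law and let $N$ be the $P_X$-null set off which $\mu_x\in\mathcal P_0$. For $x\notin N$, the conditional risk $L_{\ell,P}(t\mid x)=\int\ell(t,y)\,\mu_x(dy)$ is exactly the expected loss appearing in the definition of unique elicitation with $\mu=\mu_x$. Hence
\[
\Gamma_{\ell,P}(x)=\arg\min_{t\in\mathcal T}\int\ell(t,y)\,\mu_x(dy)=\{T(\mu_x)\}.
\]
Define $a^\star_{\ell,P}(x):=T(\mu_x)$ off $N$ and some fixed $t_0\in\mathcal T$ on $N$. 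Measurability of $x\mapsto T(\mu_x)$ is part of the hypothesis, and modifying the map on the measurable null set $N$ preserves measurability. So $a^\star_{\ell,P}$ is a measurable map with $\Gamma_{\ell,P}(X)=\{a^\star_{\ell,P}(X)\}$ almost surely. This is precisely the unique-action hypothesis of Theorem~\ref{thm:bayes-quotient-main}, and it gives the identity $a^\star_{\ell,P}(X)=T(P(Y\in\cdot\mid X))$ almost surely.

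Next I apply Theorem~\ref{thm:bayes-quotient-main}, specifically the equivalence of items 1 and 4. It gives that $H$ is Bayes-sufficient if and only if $\sigma(a^\star_{\ell,P}(X))\subseteq\sigma(H)$ mod $P_X$. Since $a^\star_{\ell,P}(X)$ and $T(P(Y\in\cdot\mid X))$ agree almost surely, they generate the same $\sigma$-algebra modulo $P_X$. So $\mathcal I_{\ell,P}=\sigma(T(P(Y\in\cdot\mid X)))$ mod $P_X$, which yields the displayed equivalence. The minimality claim then follows directly from Definition~\ref{def:bayes-minimality-main} with the same substitution of $\sigma$-algebras.

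The only delicate step is version-independence. Regular conditional laws are unique only up to $P_X$-null sets, so I must check that the conclusion does not depend on the chosen version. On a standard Borel space two versions agree off a null set. Therefore the corresponding values $T(\mu_x)$ agree almost surely, and the mod-$P_X$ statements are unaffected. I would also note that "generated $\sigma$-algebra mod $P_X$" is insensitive to almost-sure modification. To make this precise, I would complete both $\sigma$-algebras with the $P_X$-null sets, and a.s.\ equal variables generate the same completed $\sigma$-algebra. I expect no further obstacle: the existence-of-minimizers assumption is supplied here by elicitation itself.
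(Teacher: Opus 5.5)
Your proof is correct and follows the paper's argument: unique elicitation gives $\Gamma_{\ell,P}(x)=\{T(\mu_x)\}$ for almost every $x$, which supplies a measurable unique Bayes action, and then Theorem~\ref{thm:bayes-quotient-main} together with the definition of minimality finishes the proof. Your extra bookkeeping on null sets, versions of $\mu_x$, and $\sigma$-algebras modulo $P_X$ is more careful than the paper but does not change the route. One small point: since $\mathcal P_0$ need not be a measurable set, the exceptional set $N$ is only guaranteed to lie inside a measurable null set, so enlarge it to one before redefining $a^\star_{\ell,P}$ there.
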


\begin{proof}
Conditioning on $X=x$, the expected loss of reporting $t$ is $\int\ell(t,y)\,\mu_x(dy)$. Unique elicitation makes $T(\mu_x)$ the unique minimizer for almost every $x$. The result follows from Theorem~\ref{thm:bayes-quotient-main}.
\end{proof}

\subsection{Standard supervised losses}
\label{subsec:standard-losses-main}

The preceding proposition recovers the usual supervised examples and shows why the representation target is loss-dependent. Different losses applied to the same joint law may require different functions of the conditional distribution.

\begin{center}
\begin{tabular}{lll}
\toprule
\textbf{Loss} & \textbf{Bayes action} & \textbf{Bayes quotient} \\
\midrule
Zero-one classification & $\arg\max_k P(Y=k\mid X)$ & Bayes class \\
Squared loss & $\mathbb E[Y\mid X]$ & Conditional mean \\
Binary Brier loss & $P(Y=1\mid X)$ & Conditional probability \\
Finite-label log loss & $(P(Y=1\mid X),\dots,P(Y=K\mid X))$ & Conditional class-probability vector \\
Strictly proper scoring rule & $P(Y\in\cdot\mid X)$ & Conditional law \\
\bottomrule
\end{tabular}
\end{center}

More explicitly, for zero-one classification with $\mathcal Y=\{1,\dots,K\}$, the Bayes quotient is generated by the Bayes class whenever the conditional class probabilities have a unique maximizer almost surely. In the binary case, writing $\eta(X)=P(Y=1\mid X)$ and assuming $P_X(\eta(X)=1/2)=0$, the quotient is generated by $\mathbf 1\{\eta(X)>1/2\}$.

For squared loss on $\mathbb R^d$, assuming $\mathbb E\|Y\|_2^2<\infty$, the Bayes action is the conditional mean $\mathbb E[Y\mid X]$. For binary Brier loss, the Bayes action is the conditional probability $P(Y=1\mid X)$. For finite-label log loss, the Bayes action is the full conditional probability vector. More generally, a strictly proper scoring rule elicits the conditional law itself, so its Bayes quotient is generated by $P(Y\in\cdot\mid X)$.

Thus the pair $(P,\ell)$ determines the minimal predictive representation. Log loss and strictly proper scoring rules require the conditional law, zero-one loss requires only the Bayes class, squared loss requires the conditional mean, and binary Brier loss requires the conditional probability.

\section{Experiments}
\label{sec:experiments}

The theory in Section~\ref{sec:theory} is a population theory: it gives conditions under which a representation contains enough information to implement Bayes-optimal decisions for a fixed loss. The experiments have a deliberately limited role. They make the quotient structure visible in settings where the decisive variables can be inspected.

The evidence is organized into two parts. Section~\ref{subsec:exp-synthetic} uses a synthetic distribution whose Bayes quotients are known analytically. Population risks are first computed for hand-defined representations; neural bottleneck representations are then learned from continuous observations generated from the same latent law. This separates the information-theoretic statement from the empirical question of what a finite trained encoder retains. Section~\ref{subsec:exp-inaturalist} moves to a real fine-grained vision dataset, iNaturalist, where the biological taxonomy provides a natural deterministic refinement from species to genus to family. That experiment gives a DNN analogue of the same refinement logic.

Throughout, population sufficiency is distinguished from empirical recoverability. In the finite synthetic calculation, sufficiency and minimality are known analytically. In the learned-representation experiments, the encoder is frozen and downstream probes are trained. Probe success shows that the target variable is accessible to the chosen probe class; probe failure is interpreted more cautiously, since information may be absent or encoded in a way beyond the chosen probe class.

\subsection{Known quotients and learned refinements}
\label{subsec:exp-synthetic}

The synthetic experiments put the theory in a setting where the quotients are known in advance. One latent law supports two supervised decision problems. For zero-one classification, the Bayes quotient is coarse. For probabilistic prediction under log loss or Brier score, the Bayes quotient is strictly finer. The point is twofold: Bayes sufficiency is loss-dependent, and sufficiency need not be minimality.

Let
\[
S\in\{-1,+1\},\qquad T\in\{1,\dots,5\},
\]
with $S$ and $T$ sampled independently and uniformly. Conditional on $(S,T)$,
\[
Y\mid S,T\sim\mathrm{Bernoulli}(\eta(S,T)).
\]
The conditional probabilities are
\[
\eta(+1,T)\in\{0.55,0.65,0.75,0.85,0.95\},
\]
with symmetric values
\[
\eta(-1,T)\in\{0.45,0.35,0.25,0.15,0.05\}.
\]
The Bayes classifier is determined by $S$, whereas calibrated probabilistic prediction requires the probability level $\eta(S,T)$. Since the ten values of $\eta(S,T)$ are distinct, the log-loss quotient is equivalent to the full pair $(S,T)$.

First, three hand-defined representations are compared.
\[
H_1=S,\qquad H_2=(S,T),\qquad H_3=(S,T,U),
\]
where $U$ is an independent irrelevant coordinate with seven possible values. Population risks are computed by summing over the finite support. The results are shown in Table~\ref{tab:synthetic-exact}. The representation $H_1=S$ is sufficient and minimal for zero-one classification, because it coincides with the Bayes-class quotient. It falls short for log loss or Brier score, because it collapses the five conditional-probability levels within each sign of $S$. The representation $H_2=(S,T)$ determines $\eta(S,T)$, making it sufficient and minimal for probabilistic prediction; for classification, it is a nonminimal refinement. Finally, $H_3=(S,T,U)$ is sufficient for the same losses as $H_2$ and nonminimal because it contains irrelevant information.

\begin{table}[t]
\centering
\caption{Known-quotient finite experiment. Population sufficiency/minimality and exact risks are computed by summing over the finite support. Empirical risks are downstream-head results at $n_{\mathrm{train}}=10{,}000$, averaged over repetitions.}
\label{tab:synthetic-exact}
\begin{tabular}{lcccccccc}
\toprule
Representation & Atoms & Acc. suff. & Log suff. & Min. acc. & Min. log & Pop. NLL & Emp. NLL & Emp. Brier \\
\midrule
$H_1=S$ & 2  & yes & no  & yes & no  & 0.5623 & $0.5624\pm0.0023$ & $0.1875\pm0.0010$ \\
$H_2=(S,T)$ & 10 & yes & yes & no  & yes & 0.5038 & $0.5043\pm0.0024$ & $0.1677\pm0.0009$ \\
$H_3=(S,T,U)$ & 70 & yes & yes & no & no & 0.5038 & $0.5073\pm0.0026$ & $0.1687\pm0.0010$ \\
\bottomrule
\end{tabular}
\end{table}

The empirical downstream heads reproduce the population pattern. Accuracy is essentially identical for $H_1$ and $H_2$, while $H_1$ has substantially larger NLL and Brier score. The representation $H_3$ is population-equivalent to $H_2$; its empirical NLL and Brier score are slightly worse at finite sample size because the probability head must estimate more cells. This finite-sample gap reflects the expected distinction between population information and finite-sample estimation.

Paired test-set contrasts for $H_1-H_2$ make the loss-dependence explicit. The zero-one gap is zero; the NLL gap is approximately $0.059$ with 95\% CI $[0.057,0.061]$ and the Brier gap is approximately $0.020$ with 95\% CI $[0.0195,0.0209]$. The coarse quotient is Bayes-sufficient for classification and falls short for probabilistic prediction.

\begin{figure}[t]
\centering
\includegraphics[width=.74\linewidth]{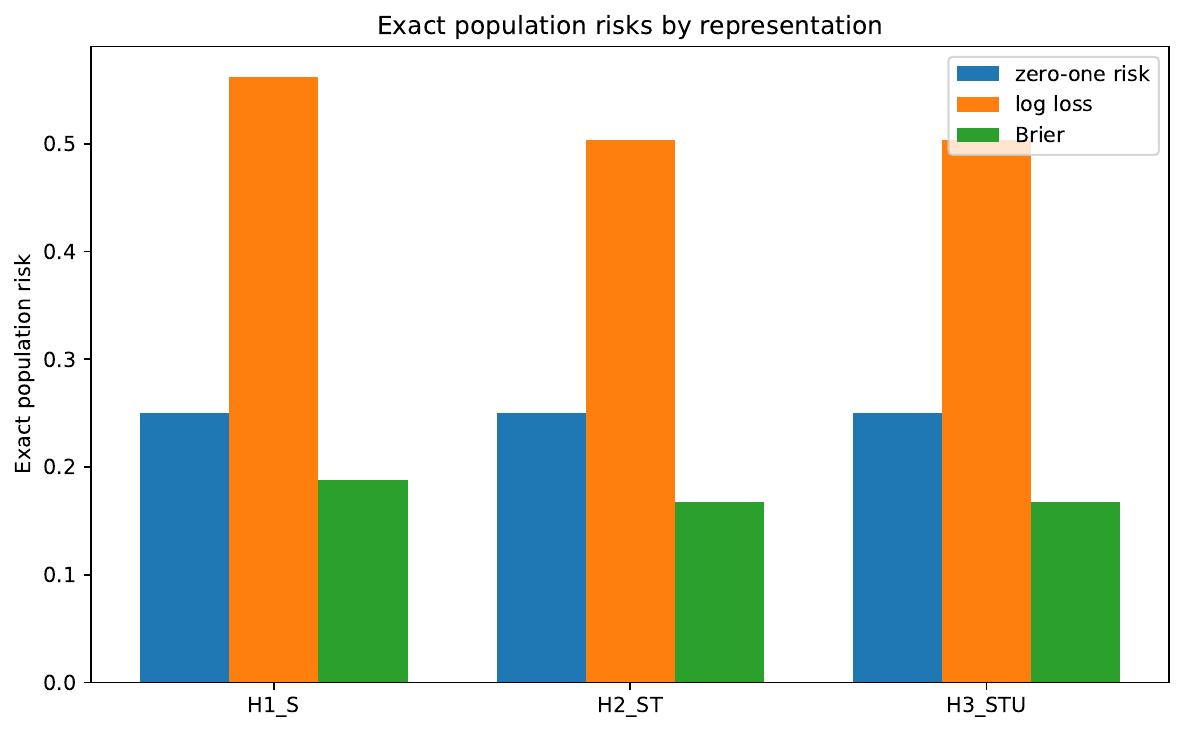}
\caption{Exact population risks in the known-quotient finite experiment. All three representations contain the Bayes class and hence have the same zero-one risk. Only representations that determine $(S,T)$ attain the optimal log-loss and Brier risks.}
\label{fig:synthetic-pop-risks}
\end{figure}

Next, neural encoders are learned from continuous observations generated by this latent law. The observed input is a continuous vector containing a strong component for $S$, weaker components for $T$, nuisance variation, and observation noise. The encoder must learn a representation from data.

Encoders are trained under two kinds of supervision. The classification objective uses the deterministic Bayes class
\[
C=\mathbf 1\{\eta(S,T)>1/2\}=\mathbf 1\{S=+1\},
\]
which requires only the coarse quotient $S$. The probability objective uses the soft target $\eta(S,T)$, requiring the finer probability-level quotient. Each objective is trained with a tight bottleneck and a wider representation, giving class-tight, class-wide, prob-tight, and prob-wide encoders. A random-wide encoder is included as a random-feature control.

After training, each encoder is frozen. Probes are trained to recover $S$, $T$, the full state $(S,T)$, and the eta level. Since the ten values of $\eta(S,T)$ are distinct, eta-level classification and $(S,T)$ classification both measure access to the finer quotient. Results over 30 random seeds are shown in Table~\ref{tab:synthetic-learned}. All trained encoders recover $S$ almost perfectly, as expected. The important differences appear in the finer variables. A tight classification bottleneck makes $T$ essentially unrecoverable, with $T$ accuracy at chance. Probability training substantially increases eta-level and state recoverability, and widening the classification representation also increases recovery of non-required fine information.

\begin{table}[t]
\centering
\caption{Learned synthetic representations on the known-quotient distribution. Values are means with 95\% confidence intervals across 30 random seeds. The coarse quotient $S$ is recoverable from all trained encoders; finer quotient variables depend on objective and capacity.}
\label{tab:synthetic-learned}
\begin{tabular}{lccccc}
\toprule
Condition & $S$ probe & $T$ probe & $(S,T)$ probe & Eta-level probe & Eta $R^2$ \\
\midrule
Class-tight & 1.000 [1.000, 1.000] & 0.200 [0.199, 0.201] & 0.212 [0.210, 0.213] & 0.212 [0.210, 0.213] & 0.715 [0.713, 0.716] \\
Class-wide  & 1.000 [1.000, 1.000] & 0.313 [0.306, 0.320] & 0.373 [0.366, 0.379] & 0.373 [0.366, 0.379] & 0.726 [0.723, 0.729] \\
Prob-tight  & 0.999 [0.999, 1.000] & 0.276 [0.257, 0.295] & 0.477 [0.470, 0.484] & 0.477 [0.470, 0.484] & 0.845 [0.842, 0.848] \\
Prob-wide   & 1.000 [1.000, 1.000] & 0.476 [0.465, 0.487] & 0.576 [0.570, 0.583] & 0.576 [0.570, 0.583] & 0.851 [0.848, 0.854] \\
Random-wide & 0.860 [0.833, 0.886] & 0.347 [0.339, 0.355] & 0.317 [0.305, 0.329] & 0.317 [0.305, 0.329] & 0.411 [0.369, 0.454] \\
\bottomrule
\end{tabular}
\end{table}

\begin{figure}[t]
\centering
\includegraphics[width=.82\linewidth]{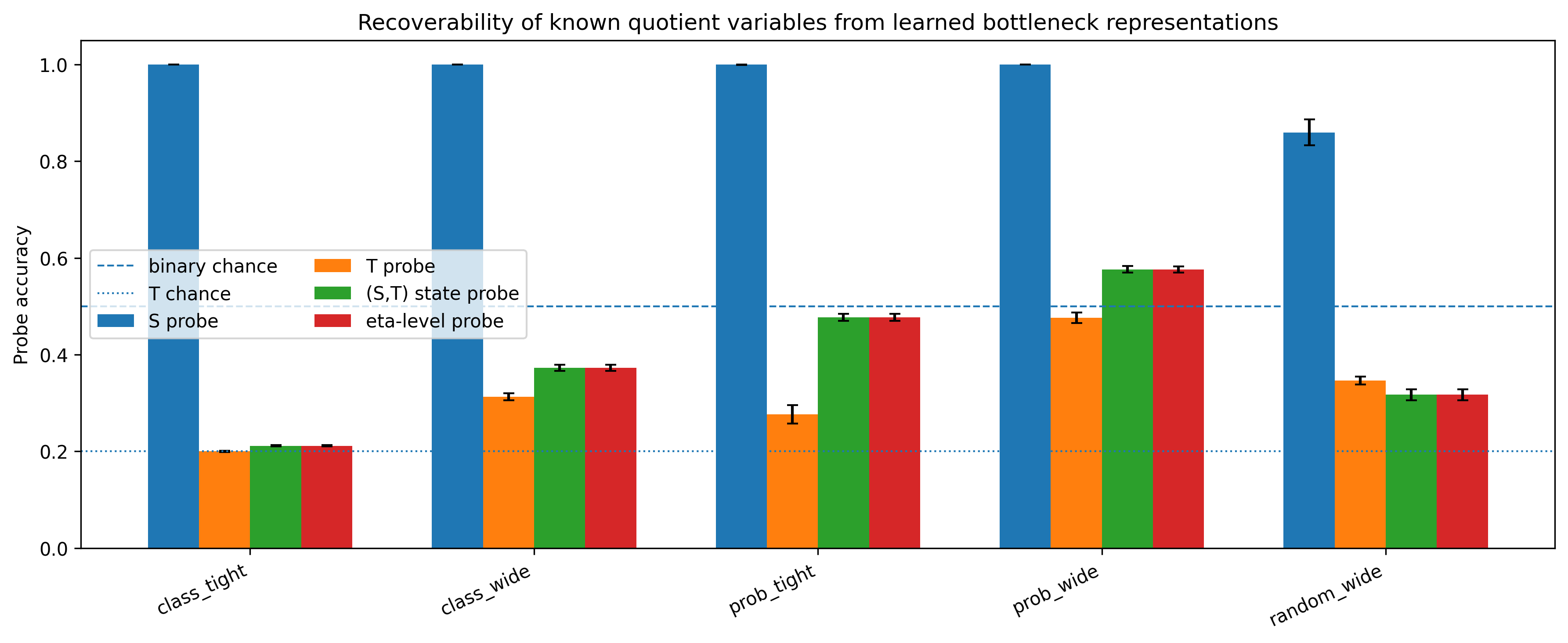}
\caption{Recoverability from learned synthetic representations. All trained encoders make the coarse quotient $S$ nearly perfectly recoverable. Finer variables $T$ and $\eta(S,T)$ are weakly recoverable from a tight classification bottleneck, more recoverable from a wide classification representation, and substantially more recoverable under probability training.}
\label{fig:synthetic-learned-probes}
\end{figure}

The paired contrasts are large and stable. Probability training improves eta-level recovery over classification training by $0.266$ at tight capacity and $0.203$ at wide capacity; both contrasts have 95\% bootstrap intervals excluding zero and paired sign-randomization $p=0.0002$. Widening the classification representation also increases eta-level recovery by $0.161$ and $T$ recovery by $0.113$, variables outside the classification objective. Class-wide and class-tight have indistinguishable $S$ recovery. These results illustrate the same principle as the exact finite calculation. All trained encoders can preserve the coarse Bayes decision quotient, and probability-trained or wider encoders preserve more of the finer probability-level quotient.

Two controls sharpen the interpretation. First, the shuffled eta-level probes remain near chance, ruling out trivial leakage through the probing protocol. Second, the random-wide encoder carries information because it is a random nonlinear map of inputs whose geometry already contains $S$ and $T$. It is treated as a random-feature baseline.

The synthetic results give both a finite and a learned view of the theory. In the finite calculation, sufficiency and minimality are known analytically. In the learned neural setting, objective and bottleneck capacity govern which known quotient variables remain recoverable. Tight classification training approaches the coarse quotient; probability training encourages preservation of the finer quotient; and wide classification training can retain additional non-required information.

\subsection{Real-data taxonomic refinement on iNaturalist}
\label{subsec:exp-inaturalist}

The preceding controlled experiments instantiate the Bayes quotients analytically. The next question is whether the refinement pattern is visible in a modern deep representation-learning setting. For this purpose, iNaturalist is used as a fine-grained natural-image dataset with a domain-native biological taxonomy. The label hierarchy has the deterministic form
\[
Y_{\mathrm{species}}
\longmapsto
Y_{\mathrm{genus}}
\longmapsto
Y_{\mathrm{family}}.
\]
A predictive distribution over species induces predictive distributions over genus and family by marginalization. This makes iNaturalist a natural real-data analogue of the quotient refinement relation studied in the theory. A species-level probabilistic target is finer than a genus-level target, which is finer than a family-level target.

This experiment tests the empirical representation-learning prediction suggested by the theory, using the iNaturalist taxonomy as a structured empirical refinement. If a representation is trained for a finer supervised decision problem, then finer taxonomic information should be more recoverable from the frozen representation. A representation trained for a coarse target may still retain non-required fine information when it is wide; a coarse bottleneck should suppress such extra information while preserving the coarse task.

A balanced taxonomic subset of iNaturalist is constructed with 40 families, 106 genera, and 277 species. Each species contributes 30 training images, 6 validation images, and 10 test images, giving 8310 training, 1662 validation, and 2770 test examples. All models use an ImageNet-pretrained ResNet-18 backbone adapted to the selected taxonomic target. Five encoders are trained, namely a wide family encoder, a bottleneck family encoder, a wide genus encoder, a wide species encoder, and a bottleneck species encoder. The wide representations have dimension 512 and the bottleneck representations have dimension 64. Two controls are also included, a frozen pretrained backbone and a random wide encoder.

After training, each encoder is frozen. Linear probes are then trained from its representation to family, genus, and species labels. For the species probe, two refinement diagnostics are also evaluated. First, the predicted species probabilities are aggregated to family probabilities using the known taxonomy. Second, within-family species accuracy is measured, with the species prediction restricted to species belonging to the true family. Finally, a shuffled-species probe is trained as a leakage/control check. All results are averaged over five random seeds.

\begin{table}[t]
\centering
\caption{Recoverability of taxonomic labels from frozen iNaturalist representations. Values are mean test accuracies over five seeds. The shuffled column reports species-probe accuracy when species labels are randomly permuted during probe training.}
\label{tab:inat-recoverability}
\begin{tabular}{lccccc}
\toprule
Representation & Family & Genus & Species & Species $\to$ family & Shuffled species \\
\midrule
Random wide & 0.203 & 0.093 & 0.049 & 0.216 & 0.003 \\
Pretrained frozen & 0.746 & 0.624 & 0.462 & 0.778 & 0.003 \\
Family bottleneck & 0.823 & 0.517 & 0.315 & 0.820 & 0.002 \\
Family wide & 0.789 & 0.618 & 0.429 & 0.819 & 0.003 \\
Genus wide & 0.772 & 0.699 & 0.491 & 0.828 & 0.003 \\
Species bottleneck & 0.769 & 0.666 & 0.515 & 0.801 & 0.003 \\
Species wide & 0.781 & 0.677 & 0.542 & 0.814 & 0.004 \\
\bottomrule
\end{tabular}
\end{table}

\begin{figure}[t]
\centering
\includegraphics[width=\linewidth]{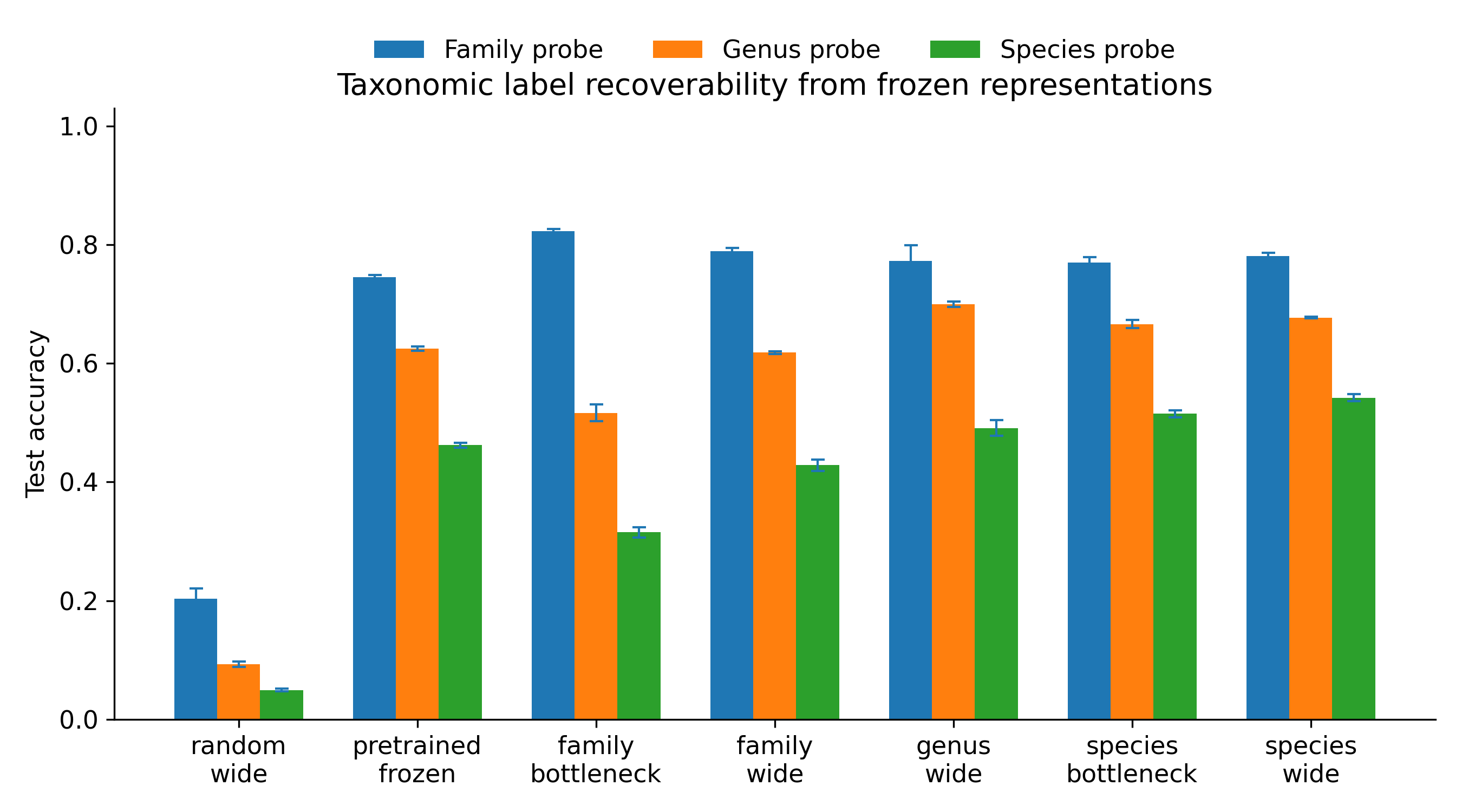}
\caption{Taxonomic label recoverability from frozen iNaturalist representations. Finer supervision increases recoverability of the corresponding finer labels, while coarse bottlenecks reduce non-required species information.}
\label{fig:inat-probes}
\end{figure}

\begin{figure}[t]
\centering
\includegraphics[width=0.72\linewidth]{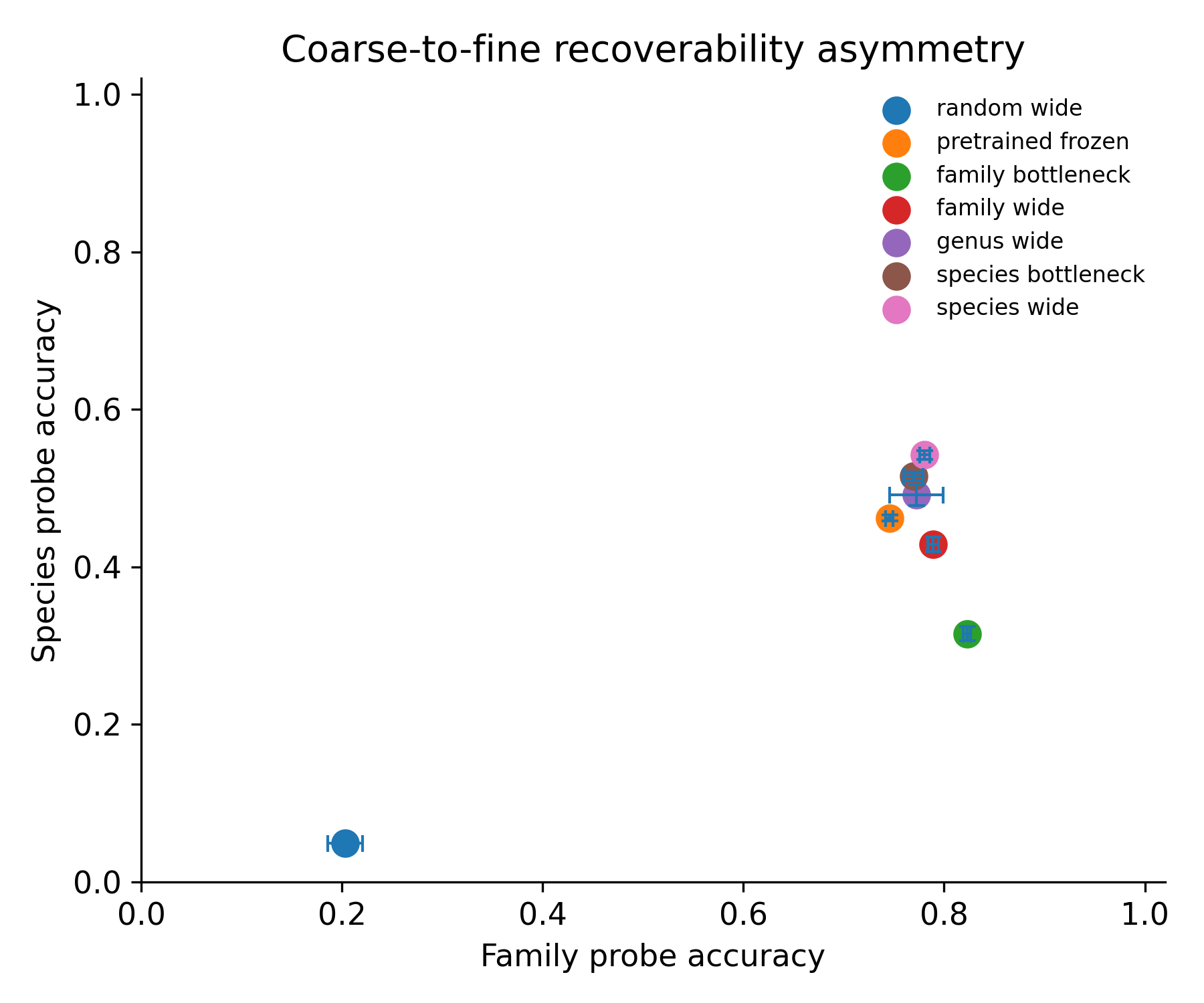}
\caption{Family--species recoverability tradeoff. The horizontal axis measures coarse family recoverability and the vertical axis measures fine species recoverability. Family bottleneck representations preserve the coarse target while discarding more species information than wide or fine-supervised representations.}
\label{fig:inat-asymmetry}
\end{figure}

\begin{figure}[t]
\centering
\includegraphics[width=\linewidth]{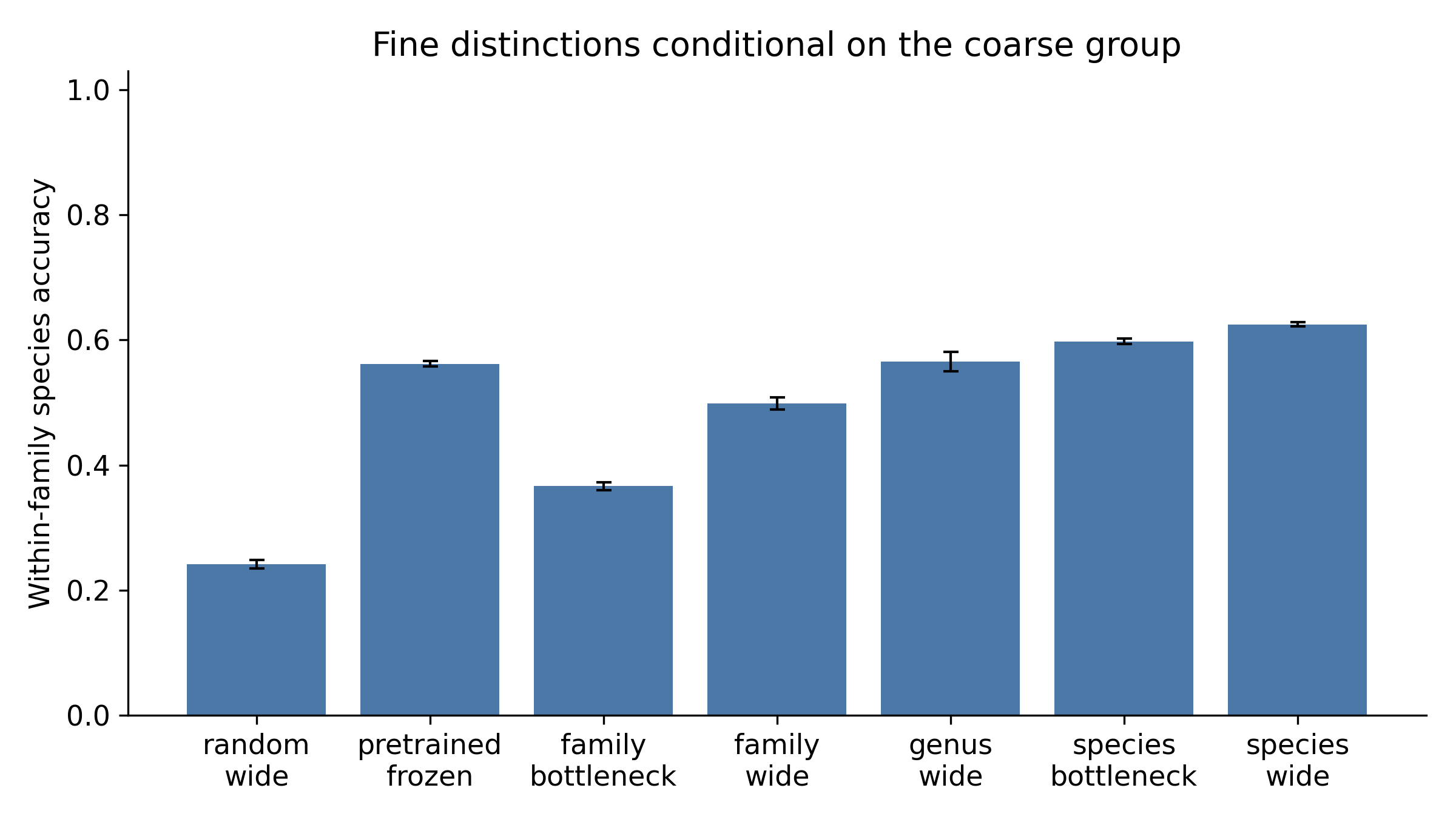}
\caption{Within-family species recovery. The true family is used only to restrict the candidate species set at evaluation time. Species-supervised representations retain the strongest fine distinctions within each coarse family.}
\label{fig:inat-within-family}
\end{figure}

\begin{table}[t]
\centering
\caption{Planned paired contrasts for the iNaturalist refinement experiment. Differences are paired across seeds. Confidence intervals are bootstrap intervals over the five paired seeds; $p$-values are paired sign-randomization/permutation values.}
\label{tab:inat-contrasts}
\begin{tabular}{lccc}
\toprule
Contrast & Mean difference & 95\% CI & $p$ \\
\midrule
Species wide $-$ family wide, species probe & 0.114 & [0.105, 0.121] & 0.028 \\
Species bottleneck $-$ family bottleneck, species probe & 0.200 & [0.196, 0.204] & 0.028 \\
Genus wide $-$ family wide, genus probe & 0.081 & [0.077, 0.085] & 0.028 \\
Family wide $-$ family bottleneck, species probe & 0.113 & [0.103, 0.121] & 0.028 \\
\bottomrule
\end{tabular}
\end{table}

The learned models first pass a basic adequacy check. Each trained encoder solves its own supervised target substantially above chance. The family encoders reach about 0.825 family accuracy, the genus encoder reaches 0.717 genus accuracy, and the species encoders reach 0.519--0.558 species accuracy, with species top-5 accuracy around 0.79--0.82. The downstream probe results are not explained by failure to learn the training targets.

The main refinement pattern is clear in Table~\ref{tab:inat-recoverability} and Figure~\ref{fig:inat-probes}. Species-supervised representations expose more species information than family-supervised representations. The species-wide encoder obtains 0.542 species probe accuracy, compared with 0.429 for the family-wide encoder. The same pattern is stronger in the bottleneck comparison. The species bottleneck obtains 0.515 species accuracy, compared with 0.315 for the family bottleneck. Similarly, the genus-wide encoder improves genus recoverability relative to the family-wide encoder, 0.699 versus 0.618. These effects are stable across seeds, as shown by the paired contrasts in Table~\ref{tab:inat-contrasts}.

The family-wide and family-bottleneck encoders expose nonminimality in the coarse task. The family-wide encoder is trained for the family target and retains substantial species information, with species probe accuracy at 0.429. The family bottleneck preserves strong family performance while reducing species recoverability to 0.315. This is the empirical analogue of a sufficient refinement: a representation may contain all information needed for a coarse Bayes action while also retaining finer, non-required distinctions.

The within-family diagnostic in Figure~\ref{fig:inat-within-family} sharpens this comparison. When the true family is supplied and the probe only has to distinguish species within that family, family-wide representations are more informative than family-bottleneck representations, while species-supervised representations are most informative. The species-wide encoder reaches 0.625 within-family species accuracy, compared with 0.498 for family wide and 0.366 for family bottleneck. The shuffled-species control remains near zero in all trained conditions, supporting the interpretation that the species probes rely on genuine taxonomic signal.

Finally, the species-to-family aggregation diagnostic confirms that fine probabilistic predictions can support the coarser target by marginalization along the taxonomy. Aggregating species-probe probabilities to family gives high family accuracy across trained representations, including 0.814 for the species-wide encoder. This is the finite-label analogue of the fact that a conditional law over a refined label determines the conditional law over a coarser deterministic function of that label.

\section{Discussion and limitations}
\label{sec:discussion}

The main implication of the theory is that predictive information is loss-dependent. A representation can be optimal for a supervised task by preserving the information needed for the Bayes action. In the unique Bayes-action setting, this information is the sigma-algebra generated by the Bayes action. Sufficiency asks for its retention; minimality asks for no more. This separates two notions that are often conflated in representation learning: being good enough for optimal prediction and being compressed to the task-relevant quotient.

The experiments should be read in this light. The finite experiment verifies the quotient structure in a setting where the population objects are known analytically. The learned synthetic experiment shows how objective choice and bottleneck capacity affect the recoverability of coarse and fine quotient variables. The iNaturalist experiment gives a real-data analogue using taxonomic refinement, where fine labels deterministically refine coarser labels and provide a practical setting in which to study sufficiency, refinement, and nonminimality.

There are several limitations. First, the theory is population-level. It characterizes exact Bayes sufficiency, leaving finite-sample estimation, optimization dynamics, and the behavior of particular training algorithms for future work. Second, probe recoverability is an empirical diagnostic. A successful probe shows that information is accessible to the probe class; a failed probe leaves open whether the information is absent or encoded beyond that probe class. Third, the clean quotient characterization relies on almost-sure uniqueness of the Bayes action. When Bayes actions are nonunique, the correct object is the common-action structure on representation fibers, which is less easily summarized by a single quotient variable. Finally, the iNaturalist hierarchy serves as an empirical analogue for natural images.

These limitations point to useful extensions. One direction is an approximate theory relating excess risk to approximate quotient recovery. Another is a finite-sample theory distinguishing statistical error, optimization error, and representation error. A third is the design of objectives or regularizers that encourage both Bayes sufficiency and approximate minimality.

\section{Conclusion}
\label{sec:conclusion}

This work formalized Bayes-sufficient representation learning as a decision-theoretic information problem. For a joint law and loss, Bayes sufficiency means factorization of some Bayes-optimal predictor through the representation. In the common case of an almost-surely unique Bayes action, this yields a canonical Bayes quotient: the informational object that identifies inputs requiring the same optimal action. The minimal predictive information is the sigma-algebra generated by the Bayes action itself.

This characterization makes the representation target explicit. For a fixed supervised problem, the distribution and the loss determine the Bayes action, the Bayes action determines the quotient, and the quotient determines the minimal information required for Bayes-optimal prediction. Different losses therefore induce different predictive quotients. Zero-one loss may require only the Bayes class, squared loss the conditional mean, Brier loss the conditional probability in binary prediction, and log loss or strictly proper scoring rules the predictive distribution.

Sufficiency is containment of the relevant quotient; minimality is equality with it. A representation can be Bayes-sufficient while retaining information that is not required by the supervised decision problem. The experiments illustrate this distinction in controlled finite distributions, learned bottleneck representations, and a real-data taxonomic refinement setting. They show how the information needed for optimal action can be separated from the additional information that a learned representation may preserve.

\bibliographystyle{plainnat}
\bibliography{references}

\clearpage

\appendix

\section{Full Bayes-quotient theory}
\label{app:bayes-quotient-theory}

This appendix gives the full formal version of Section~\ref{sec:theory}. It includes the measurable setup, the factorization characterization of Bayes sufficiency, the set-valued Bayes-action case, the unique-action quotient theory, and the standard-loss derivations.

\subsection{Measurable setup and Bayes decision problems}
\label{app:technical-setup}

Let $(\mathcal X,\mathcal A_{\mathcal X})$ and $(\mathcal Y,\mathcal A_{\mathcal Y})$ be standard Borel spaces, and let
\[
P\in\mathcal P(\mathcal X\times\mathcal Y),\qquad (X,Y)\sim P.
\]
Write $P_X$ for the marginal law of $X$. Let $(\mathsf A,\mathcal A_{\mathsf A})$ be a standard Borel action space and let
\[
\ell:\mathsf A\times\mathcal Y\to[0,\infty]
\]
be a measurable loss. The use of nonnegative extended-valued losses includes losses such as log loss with the usual convention $-\log 0=+\infty$. All risks below are understood as extended nonnegative expectations unless finiteness is explicitly stated.

A predictor is a measurable map $f:\mathcal X\to\mathsf A$, with risk
\[
R_{\ell,P}(f):=\mathbb E_P[\ell(f(X),Y)].
\]
A representation is a measurable map $h:\mathcal X\to\mathcal H$ into a standard Borel space $(\mathcal H,\mathcal A_{\mathcal H})$. Write $H=h(X)$. A head on $H$ is a measurable map $c:\mathcal H\to\mathsf A$, and the corresponding predictor is $c\circ h$.

Since the spaces are standard Borel, regular conditional laws exist. Fix a version of the conditional law
\[
\mu_x(\cdot):=P(Y\in\cdot\mid X=x).
\]
The conditional risk of action $a$ at input value $x$ is
\[
L_{\ell,P}(a\mid x):=\int_{\mathcal Y}\ell(a,y)\,\mu_x(dy),
\]
and the Bayes action correspondence is
\[
\Gamma_{\ell,P}(x):=\arg\min_{a\in\mathsf A} L_{\ell,P}(a\mid x),
\]
whenever the argmin is nonempty.

Throughout this section, whenever a result uses $\Gamma_{\ell,P}$, assume that $\Gamma_{\ell,P}(x)$ is nonempty for $P_X$-almost every $x$ and that its graph
\[
\operatorname{Gr}(\Gamma_{\ell,P})
:=
\{(x,a)\in\mathcal X\times\mathsf A:a\in\Gamma_{\ell,P}(x)\}
\]
is measurable. This ensures that statements such as $c(h(X))\in\Gamma_{\ell,P}(X)$ are measurable events. A Bayes predictor is a measurable selector of this correspondence:
\[
\mathsf{Opt}_{\ell,P}
:=
\Bigl\{
f:\mathcal X\to\mathsf A\text{ measurable}:
f(X)\in\Gamma_{\ell,P}(X)\quad P_X\text{-a.s.}
\Bigr\}.
\]
Assume $\mathsf{Opt}_{\ell,P}$ is nonempty whenever Bayes sufficiency is discussed.

All inclusions and equalities of sigma-algebras below are understood modulo $P_X$-null sets. Thus $\sigma(U)\subseteq\sigma(V)$ means that $U$ admits a $\sigma(V)$-measurable version. Equivalently, when the variables take values in standard Borel spaces, it means that $U=\phi(V)$ almost surely for some measurable map $\phi$.

\begin{definition}[Bayes sufficiency]
\label{def:bayes-sufficiency-app}
A representation $H=h(X)$ is \emph{Bayes-sufficient} for $(P,\ell)$ if there exists a measurable head $c:\mathcal H\to\mathsf A$ such that
\[
c(h(X))\in\Gamma_{\ell,P}(X)
\qquad P_X\text{-a.s.}
\]
Equivalently, at least one Bayes-optimal decision rule can be implemented from the representation.
\end{definition}

The definition is existential. In the set-valued case, Bayes sufficiency asks the representation to preserve enough information to select one Bayes-optimal action on almost every representation fiber. This is the appropriate notion for decision-theoretic sufficiency. The representation is sufficient if it supports Bayes-optimal action, even when the full set of Bayes actions contains additional possibilities.

This definition is stated through Bayes actions, which separates sufficiency from technical attainment questions. If the unrestricted Bayes risk is attained and the restricted risk over heads on $H$ is attained, then Definition~\ref{def:bayes-sufficiency-app} is equivalent to
\[
\inf_{c:\mathcal H\to\mathsf A}
\mathbb E[\ell(c(H),Y)]
=
\inf_{f:\mathcal X\to\mathsf A}
\mathbb E[\ell(f(X),Y)].
\]
The action-based formulation is the one used below.

\subsection{Bayes sufficiency as factorization}
\label{app:factorization-proof}

The first result gives the basic representation-theoretic form of Bayes sufficiency. A representation is sufficient exactly when at least one Bayes predictor factors through it.

\begin{theorem}[Bayes sufficiency iff Bayes predictor factorization]
\label{thm:factorization-app}
Assume $\mathsf{Opt}_{\ell,P}$ is nonempty. Let $h:\mathcal X\to\mathcal H$ be measurable and let $H=h(X)$. The following statements are equivalent.
\begin{enumerate}
    \item $H$ is Bayes-sufficient for $(P,\ell)$.
    \item There exists a measurable head $c:\mathcal H\to\mathsf A$ such that
    \[
    c(h(X))\in\Gamma_{\ell,P}(X)
    \qquad P_X\text{-a.s.}
    \]
    \item There exist $f^\star\in\mathsf{Opt}_{\ell,P}$ and a measurable head $c:\mathcal H\to\mathsf A$ such that
    \[
    f^\star(X)=c(H)
    \qquad P_X\text{-a.s.}
    \]
    \item At least one Bayes predictor $f^\star\in\mathsf{Opt}_{\ell,P}$ is $\sigma(H)$-measurable as a random variable $f^\star(X)$.
\end{enumerate}
\end{theorem}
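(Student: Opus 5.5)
The plan is to prove the cycle (1)$\Leftrightarrow$(2)$\Rightarrow$(3)$\Rightarrow$(4)$\Rightarrow$(2). The equivalence (1)$\Leftrightarrow$(2) is immediate: statement (2) is verbatim Definition~\ref{def:bayes-sufficiency-app}. The real content is in passing between the \emph{membership} formulation (2), the \emph{factorization} formulation (3), and the \emph{measurability} formulation (4).

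For (2)$\Rightarrow$(3), take the head $c$ from (2) and set $f^\star:=c\circ h$. This is a measurable map $\mathcal X\to\mathsf A$, being a composition of measurable maps. By (2), $f^\star(X)\in\Gamma_{\ell,P}(X)$ almost surely, so $f^\star\in\mathsf{Opt}_{\ell,P}$, and $f^\star(X)=c(H)$ holds identically.

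For (3)$\Rightarrow$(4), the random variable $c(H)=c\circ h(X)$ is $\sigma(H)$-measurable, since $c$ is measurable. Because $f^\star(X)=c(H)$ almost surely, $f^\star(X)$ admits a $\sigma(H)$-measurable version. This is exactly the meaning of $\sigma(H)$-measurability modulo $P_X$ fixed in Appendix~\ref{app:technical-setup}.

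The step requiring care is (4)$\Rightarrow$(2). The main tool is the Doob--Dynkin factorization lemma, which applies because $\mathsf A$ is standard Borel. Suppose $f^\star\in\mathsf{Opt}_{\ell,P}$ and $f^\star(X)$ agrees almost surely with some $\sigma(H)$-measurable random variable $Z$ taking values in $\mathsf A$. Doob--Dynkin then produces a measurable $c:\mathcal H\to\mathsf A$ with $Z=c(H)$. To justify it, I would use that a standard Borel $\mathsf A$ is Borel-isomorphic to a Borel subset of $[0,1]$ (or is countable). I would transport $Z$ to the real line, apply the real-valued Doob--Dynkin lemma, and pull back. If the resulting real-valued factor leaves the image set, I would redefine it on that exceptional set to a fixed point $a_0\in\mathsf A$. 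This requires the image to be a Borel set, which it is, as the image of a Borel isomorphism.

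Finally, $c(H)=Z=f^\star(X)\in\Gamma_{\ell,P}(X)$ almost surely. The assumed measurability of $\operatorname{Gr}(\Gamma_{\ell,P})$ guarantees that this event is measurable, so the almost-sure statement is meaningful, and (2) follows. The same Doob--Dynkin argument, read in reverse, also shows that (3) and (4) are equivalent directly.
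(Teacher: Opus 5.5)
Your proof is correct and follows the paper's argument. The equivalence (1)$\Leftrightarrow$(2) is definitional, (2)$\Rightarrow$(3) uses the composition $f^\star=c\circ h$, and the passage back from $\sigma(H)$-measurability to a head uses Doob--Dynkin for standard Borel $\mathsf A$. Closing the cycle through (4)$\Rightarrow$(2) instead of (4)$\Rightarrow$(3)$\Rightarrow$(2) is only a reordering, and your Borel-isomorphism justification of Doob--Dynkin spells out a step the paper simply cites.
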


\begin{proof}
The equivalence between (1) and (2) is exactly Definition~\ref{def:bayes-sufficiency-app}.

Assume (2). Define $f^\star=c\circ h$. Then $f^\star$ is measurable and satisfies
\[
f^\star(X)=c(h(X))\in\Gamma_{\ell,P}(X)
\qquad P_X\text{-a.s.}
\]
Hence $f^\star\in\mathsf{Opt}_{\ell,P}$ and (3) holds. Conversely, if (3) holds, then
\[
c(h(X))=f^\star(X)\in\Gamma_{\ell,P}(X)
\qquad P_X\text{-a.s.},
\]
so (2) holds.

If (3) holds, then $f^\star(X)=c(H)$ almost surely, so $f^\star(X)$ is $\sigma(H)$-measurable. Thus (4) holds. Conversely, suppose (4) holds. Since $f^\star(X)$ is $\sigma(H)$-measurable and both $\mathcal H$ and $\mathsf A$ are standard Borel, the Doob--Dynkin factorization lemma gives a measurable map $c:\mathcal H\to\mathsf A$ such that
\[
f^\star(X)=c(H)
\qquad P_X\text{-a.s.}
\]
Thus (3) holds.
\end{proof}

Theorem~\ref{thm:factorization-app} is elementary, and it is the point at which supervised representation learning becomes a question about information. Bayes sufficiency asserts that $H$ contains enough information to implement at least one Bayes-optimal action rule for the chosen loss.

\subsection{The nonunique case and common Bayes actions on conditional representation fibers}
\label{app:nonunique-actions}

In the presence of multiple Bayes actions, requiring all inputs merged by a representation to have identical Bayes-action sets is generally too strong. If
\[
\Gamma_{\ell,P}(x_1)=\{a,b\}
\qquad\text{and}\qquad
\Gamma_{\ell,P}(x_2)=\{a\},
\]
then the action sets differ, and a representation may merge $x_1$ and $x_2$ while maintaining Bayes optimality, because the head can choose the common action $a$. The correct condition is a common-action condition along conditional fibers of the representation.

Here the term ``fiber'' is used at the input level. It concerns the conditional law of $X$ given $H=u$. This complements the predictor-fiber viewpoint on representation--head factorizations, where fibers are sets of pairs $(h,c)$ inducing the same composite predictor.

\begin{proposition}[Common-action characterization on conditional representation fibers]
\label{prop:fiberwise-app}
Assume the hypotheses of Theorem~\ref{thm:factorization-app}. Let $K(u,\cdot)$ be a regular conditional distribution of $X$ given $H=u$. Then $H=h(X)$ is Bayes-sufficient for $(P,\ell)$ if and only if there exists a measurable map $a:\mathcal H\to\mathsf A$ such that, for $P_H$-almost every $u$,
\[
K\bigl(u,\{x\in\mathcal X:a(u)\in\Gamma_{\ell,P}(x)\}\bigr)=1.
\]
Equivalently, for almost every representation value $u$, there is a single action $a(u)$ that is Bayes-optimal for $K(u,\cdot)$-almost every input conditionally assigned to that representation value.
\end{proposition}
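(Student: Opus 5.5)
The proof reduces both directions to Theorem~\ref{thm:factorization-app}: Bayes sufficiency means there is a head $c$ with $c(h(X))\in\Gamma_{\ell,P}(X)$ almost surely. The fiber condition is this same almost-sure statement, disintegrated along the conditional law $K$ of $X$ given $H$. Since $\operatorname{Gr}(\Gamma_{\ell,P})$ is assumed measurable, the set
\[
E:=\{(u,x)\in\mathcal H\times\mathcal X:\ (x,a(u))\in\operatorname{Gr}(\Gamma_{\ell,P})\}
\]
is measurable in $\mathcal H\times\mathcal X$ whenever $a:\mathcal H\to\mathsf A$ is measurable. Indeed, it is the preimage of the graph under the measurable map $(u,x)\mapsto(x,a(u))$. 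Its $u$-section is exactly $\{x: a(u)\in\Gamma_{\ell,P}(x)\}$. Hence $u\mapsto K(u,E_u)$ is measurable by the standard property of kernels on product-measurable sets.

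The key identity is the disintegration formula for the joint law of $(H,X)$. Because $H=h(X)$, this joint law equals $P_H\otimes K$, so for measurable $E$,
\[
P\bigl((H,X)\in E\bigr)=\int_{\mathcal H}K(u,E_u)\,P_H(du).
\]
For the forward direction, take the head $c$ from Definition~\ref{def:bayes-sufficiency-app} and set $a:=c$. Then $P((H,X)\in E)=1$. Since $K(u,E_u)\le 1$, the integral equals $1$ only if $K(u,E_u)=1$ for $P_H$-a.e.\ $u$, which is the fiber condition. For the converse, given $a$ with $K(u,E_u)=1$ for $P_H$-a.e.\ $u$, the same formula gives $P((H,X)\in E)=1$. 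That is, $a(h(X))\in\Gamma_{\ell,P}(X)$ $P_X$-a.s., so $c:=a$ witnesses Bayes sufficiency.

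I expect the only real obstacle to be justifying the disintegration identity on an arbitrary product-measurable set, rather than only on rectangles. I would handle it in the standard way. Existence of $K$ follows from $\mathcal X$ and $\mathcal H$ being standard Borel. The identity holds for rectangles $B\times C$ by the defining property $P(H\in B, X\in C)=\int_B K(u,C)\,P_H(du)$. It then extends to all of $\mathcal A_{\mathcal H}\otimes\mathcal A_{\mathcal X}$ by a $\pi$--$\lambda$ (monotone class) argument. The same argument gives measurability of $u\mapsto K(u,E_u)$. A minor remark: one may also note that $K(u,\cdot)$ is concentrated on $h^{-1}(u)$ for a.e.\ $u$, which justifies the phrase ``inputs conditionally assigned to that representation value''. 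However, this fact is not needed for the equivalence.
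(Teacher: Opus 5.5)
Your proof is correct and follows the paper's argument. In both, you set $a:=c$ (respectively $c:=a$), disintegrate the event $\{a(h(X))\in\Gamma_{\ell,P}(X)\}$ along $K$, and use that a $[0,1]$-valued integrand integrating to $1$ under $P_H$ must equal $1$ almost everywhere. The one difference is in the forward direction: the paper disintegrates the set $\{x:c(h(x))\in\Gamma_{\ell,P}(x)\}\subseteq\mathcal X$ and then invokes concentration of $K(u,\cdot)$ on $h^{-1}(u)$, whereas you disintegrate the product-measurable set $E$ directly (as the paper's converse implicitly does), so that concentration step is unnecessary, as you remark.
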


\begin{proof}
Suppose first that $H$ is Bayes-sufficient. By Theorem~\ref{thm:factorization-app}, there exists a measurable head $c:\mathcal H\to\mathsf A$ such that
\[
c(h(X))\in\Gamma_{\ell,P}(X)
\qquad P_X\text{-a.s.}
\]
Let
\[
S:=\{x\in\mathcal X:c(h(x))\in\Gamma_{\ell,P}(x)\}.
\]
By the measurability of the graph of $\Gamma_{\ell,P}$, the set $S$ is measurable, and by assumption $P_X(S)=1$. Since $K$ is a regular conditional distribution of $X$ given $H$,
\[
1=P_X(S)=\int_{\mathcal H}K(u,S)\,P_H(du).
\]
Hence $K(u,S)=1$ for $P_H$-almost every $u$. Also, for $P_H$-almost every $u$, the kernel $K(u,\cdot)$ is supported on the conditional fiber $\{x:h(x)=u\}$ modulo null sets. Therefore, for such $u$,
\[
K\bigl(u,\{x:c(u)\in\Gamma_{\ell,P}(x)\}\bigr)=1.
\]
Taking $a=c$ gives the required map.

Conversely, suppose there exists a measurable $a:\mathcal H\to\mathsf A$ satisfying the displayed common-action condition. Define the head $c=a$. Then, using disintegration with respect to $H$,
\[
\begin{aligned}
P_X\{x:c(h(x))\in\Gamma_{\ell,P}(x)\}
&=
\int_{\mathcal H}
K\bigl(u,\{x:c(u)\in\Gamma_{\ell,P}(x)\}\bigr)
\,P_H(du)\\
&=1.
\end{aligned}
\]
Thus $c(h(X))\in\Gamma_{\ell,P}(X)$ almost surely. By Theorem~\ref{thm:factorization-app}, $H$ is Bayes-sufficient.
\end{proof}

Proposition~\ref{prop:fiberwise-app} explains why the quotient picture is simplest under uniqueness. With nonunique Bayes actions, Bayes-sufficient representation fibers must admit a common Bayes action, while the full correspondence $x\mapsto\Gamma_{\ell,P}(x)$ may vary along those fibers. The remainder of the section therefore isolates the standard unique-action and uniquely elicited settings in which a canonical quotient is generated by a measurable statistic.

\subsection{Unique Bayes actions and Bayes quotients}
\label{app:unique-actions}

Assume now that the Bayes action is unique almost surely. That is, assume there exists a measurable map
\[
a^\star_{\ell,P}:\mathcal X\to\mathsf A
\]
such that
\[
\Gamma_{\ell,P}(X)=\{a^\star_{\ell,P}(X)\}
\qquad P_X\text{-a.s.}
\]
Then every Bayes predictor agrees with $a^\star_{\ell,P}$ almost surely. The random variable $a^\star_{\ell,P}(X)$ is the Bayes-relevant decision statistic for the loss. It induces the equivalence relation
\[
x\sim_{\ell,P}x'
\quad\Longleftrightarrow\quad
a^\star_{\ell,P}(x)=a^\star_{\ell,P}(x'),
\]
up to $P_X$-null sets. To avoid relying on a possibly inconvenient quotient space, the Bayes quotient is defined informationally as the sigma-algebra generated by this statistic.

\begin{definition}[Bayes quotient in the unique-action case]
\label{def:bayes-quotient-unique-app}
When the Bayes action is unique almost surely, the \emph{Bayes quotient} of $(P,\ell)$ is any random variable $Q_{\ell,P}=q_{\ell,P}(X)$ satisfying
\[
\sigma(Q_{\ell,P})=\sigma(a^\star_{\ell,P}(X))
\qquad \text{mod }P_X.
\]
The associated \emph{Bayes information} is the sigma-algebra
\[
\mathcal I_{\ell,P}:=\sigma(a^\star_{\ell,P}(X)).
\]
\end{definition}

The quotient is therefore meant in an informational sense. Inputs are identified when they induce the same Bayes action. The formal object is the sigma-algebra generated by the Bayes action, which avoids choosing a particular quotient space and makes the definition invariant under measurable reparameterizations of the Bayes act.

\begin{theorem}[Unique Bayes action implies quotient characterization]
\label{thm:unique-quotient-app}
Assume the Bayes action is unique $P_X$-almost surely and is represented by the measurable map $a^\star_{\ell,P}$. Let $H=h(X)$. The following statements are equivalent.
\begin{enumerate}
    \item $H$ is Bayes-sufficient for $(P,\ell)$.
    \item There exists a measurable map $c:\mathcal H\to\mathsf A$ such that
    \[
    a^\star_{\ell,P}(X)=c(H)
    \qquad P_X\text{-a.s.}
    \]
    \item $a^\star_{\ell,P}(X)$ is $\sigma(H)$-measurable.
    \item The Bayes information is contained in the representation information,
    \[
    \mathcal I_{\ell,P}
    =
    \sigma(a^\star_{\ell,P}(X))
    \subseteq
    \sigma(H)
    \qquad \text{mod }P_X.
    \]
\end{enumerate}
Consequently, in the unique-action case, Bayes-sufficient representations are exactly refinements of the Bayes quotient.
\end{theorem}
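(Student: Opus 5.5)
The plan is to run a cycle of implications (1)$\Rightarrow$(2)$\Rightarrow$(3)$\Rightarrow$(4)$\Rightarrow$(1). The main tools are Theorem~\ref{thm:factorization-app}, the almost-sure uniqueness of the Bayes action, and the Doob--Dynkin lemma for standard Borel spaces.

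For (1)$\Rightarrow$(2), Theorem~\ref{thm:factorization-app} supplies a measurable head $c$ with $c(h(X))\in\Gamma_{\ell,P}(X)$ almost surely. By uniqueness, $\Gamma_{\ell,P}(X)=\{a^\star_{\ell,P}(X)\}$ almost surely. Intersecting these two full-measure events gives $c(H)=a^\star_{\ell,P}(X)$ almost surely.

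For (2)$\Rightarrow$(3), observe that $c(H)$ is $\sigma(H)$-measurable because $c$ is measurable. Since $a^\star_{\ell,P}(X)$ agrees with it almost surely, it admits a $\sigma(H)$-measurable version. This is exactly the meaning of measurability modulo $P_X$ fixed in Appendix~\ref{app:technical-setup}.

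For (3)$\Rightarrow$(4), every generating set $\{a^\star_{\ell,P}(X)\in B\}$ with $B\in\mathcal A_{\mathsf A}$ agrees up to a null set with the corresponding preimage of the $\sigma(H)$-measurable version. These generators therefore lie in the $P_X$-completion of $\sigma(H)$. That completion is a sigma-algebra, so it contains all of $\mathcal I_{\ell,P}$.

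For (4)$\Rightarrow$(1), the inclusion $\mathcal I_{\ell,P}\subseteq\sigma(H)$ mod $P_X$ means that $a^\star_{\ell,P}(X)$ has a $\sigma(H)$-measurable version $Z$. This is where Doob--Dynkin enters. Because $\mathsf A$ is standard Borel, it applies to $Z$ and yields a measurable $c:\mathcal H\to\mathsf A$ with $Z=c(H)$. Hence $c(H)=a^\star_{\ell,P}(X)\in\Gamma_{\ell,P}(X)$ almost surely, which is Definition~\ref{def:bayes-sufficiency-app}.

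The main obstacle is the bookkeeping of null sets in the ``mod $P_X$'' statements. Specifically, one has to make sure that passing from a version that is measurable with respect to the completed sigma-algebra to a genuinely $\sigma(H)$-measurable one is legitimate. I would handle this by replacing $Z$ with a $\sigma(H)$-measurable modification, which exists because $\mathsf A$ is standard Borel (via a Borel isomorphism with a subset of $\mathbb R$), and only then applying Doob--Dynkin.

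The closing ``Consequently'' follows immediately from (4), since any Bayes quotient satisfies $\sigma(Q_{\ell,P})=\mathcal I_{\ell,P}$ by Definition~\ref{def:bayes-quotient-unique-app}.
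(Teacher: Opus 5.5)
Your proposal is correct and follows essentially the paper's own proof. Both arguments use uniqueness together with the factorization theorem to pass between (1) and (2), use Doob--Dynkin (valid since $\mathsf A$ is standard Borel) to recover a head from $\sigma(H)$-measurability, and treat (3)$\Leftrightarrow$(4) as the mod-$P_X$ convention. Arranging the steps as a cycle and spelling out the null-set bookkeeping is a presentational difference only.
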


\begin{proof}
Since the Bayes action is unique almost surely, every $f^\star\in\mathsf{Opt}_{\ell,P}$ satisfies
\[
f^\star(X)=a^\star_{\ell,P}(X)
\qquad P_X\text{-a.s.}
\]
By Theorem~\ref{thm:factorization-app}, $H$ is Bayes-sufficient if and only if some Bayes predictor factors through $H$. Under uniqueness, this is equivalent to the existence of a measurable $c:\mathcal H\to\mathsf A$ such that
\[
a^\star_{\ell,P}(X)=c(H)
\qquad P_X\text{-a.s.}
\]
Thus (1) and (2) are equivalent.

Conditions (2) and (3) are equivalent by the Doob--Dynkin factorization lemma for standard Borel spaces. Condition (3) is exactly the sigma-algebra inclusion in (4).
\end{proof}

A pointwise fiber version of Theorem~\ref{thm:unique-quotient-app} may be useful for intuition. Under suitable choices of versions, Bayes sufficiency means that there is a full-measure set $E\subseteq\mathcal X$ such that
\[
h(x)=h(x')
\quad\Longrightarrow\quad
a^\star_{\ell,P}(x)=a^\star_{\ell,P}(x')
\]
for all $x,x'\in E$. Thus a Bayes-sufficient representation may refine the Bayes quotient and must keep distinct two positive-probability input regions requiring different unique Bayes actions.

\subsection{Minimal Bayes-sufficient representations}
\label{app:minimality}

The quotient characterization separates sufficiency from minimality. Sufficiency requires that the representation contain the Bayes information. Minimality requires informational equivalence with the Bayes information.

\begin{definition}[Bayes minimality]
\label{def:bayes-minimality-app}
In the unique-action setting of Theorem~\ref{thm:unique-quotient-app}, a representation $H=h(X)$ is \emph{Bayes-minimal} for $(P,\ell)$ if
\[
\sigma(H)=\sigma(a^\star_{\ell,P}(X))
\qquad \text{mod }P_X.
\]
Equivalently, $H$ is informationally equivalent to the Bayes quotient $Q_{\ell,P}$.
\end{definition}

Thus
\[
H\text{ Bayes-sufficient}
\quad\Longleftrightarrow\quad
\sigma(Q_{\ell,P})\subseteq\sigma(H),
\]
whereas
\[
H\text{ Bayes-minimal}
\quad\Longleftrightarrow\quad
\sigma(H)=\sigma(Q_{\ell,P}).
\]
This is the precise sense in which a sufficient representation may be a strict refinement of the predictive information demanded by the loss.

\begin{corollary}[Minimal Bayes-sufficient representations are unique up to information equivalence]
\label{cor:minimal-unique-app}
Assume the Bayes action is unique almost surely. If $H_1=h_1(X)$ and $H_2=h_2(X)$ are both Bayes-minimal for $(P,\ell)$, then
\[
\sigma(H_1)=\sigma(H_2)
\qquad \text{mod }P_X.
\]
In particular, Bayes-minimal representations may use different coordinates or codomains and encode the same information about $X$.
\end{corollary}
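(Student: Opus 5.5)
The argument is a direct chaining of sigma-algebra equalities through the common reference object $\mathcal I_{\ell,P}=\sigma(a^\star_{\ell,P}(X))$. By Definition~\ref{def:bayes-minimality-app}, Bayes minimality of $H_1$ gives
\[
\sigma(H_1)=\sigma(a^\star_{\ell,P}(X)) \qquad \text{mod }P_X.
\]
Likewise, Bayes minimality of $H_2$ gives
\[
\sigma(H_2)=\sigma(a^\star_{\ell,P}(X)) \qquad \text{mod }P_X.
\]
The conclusion then follows once equality modulo $P_X$ is known to be transitive. This is an equivalence relation on sub-sigma-algebras of $\sigma(X)$.

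To make that step explicit, I will use the convention of Appendix~\ref{app:technical-setup}. There, $\sigma(U)\subseteq\sigma(V)$ mod $P_X$ means that $U$ admits a $\sigma(V)$-measurable version, equivalently $U=\phi(V)$ a.s. for some measurable $\phi$ between standard Borel spaces. From the two equalities I obtain measurable maps $\phi_1,\psi_2$ with
\[
H_1=\phi_1(a^\star_{\ell,P}(X)) \quad\text{and}\quad a^\star_{\ell,P}(X)=\psi_2(H_2)
\]
almost surely. The composite $\phi_1\circ\psi_2$ is measurable and satisfies $H_1=\phi_1\circ\psi_2(H_2)$ a.s., since the union of two null sets is null. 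This gives $\sigma(H_1)\subseteq\sigma(H_2)$ mod $P_X$. The reverse inclusion follows by exchanging the roles of $H_1$ and $H_2$.

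The second sentence of the corollary needs no further argument. Nothing above involves the codomains $\mathcal H_1,\mathcal H_2$ or the coordinates used; only the generated sigma-algebras enter.

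There is no real obstacle. The only point needing care is the bookkeeping of null sets in the phrase ``mod $P_X$'', which the composition-of-versions argument handles. Alternatively, one can phrase everything via $P_X$-completions of the sigma-algebras, where equality is literally transitive.
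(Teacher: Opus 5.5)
Your proposal is correct and matches the paper's proof, which simply chains the two equalities $\sigma(H_i)=\sigma(a^\star_{\ell,P}(X))$ mod $P_X$ from Definition~\ref{def:bayes-minimality-app}. Your composition-of-versions argument only spells out the transitivity of equality modulo $P_X$, which the paper uses implicitly.
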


\begin{proof}
By Definition~\ref{def:bayes-minimality-app},
\[
\sigma(H_1)=\sigma(a^\star_{\ell,P}(X))
\qquad\text{and}\qquad
\sigma(H_2)=\sigma(a^\star_{\ell,P}(X))
\]
modulo $P_X$-null sets. Therefore $\sigma(H_1)=\sigma(H_2)$ modulo $P_X$-null sets.
\end{proof}

\begin{remark}[Sufficient refinements and loss of minimality]
\label{rem:refinement-app}
Assume the Bayes action is unique almost surely. If $H$ is Bayes-sufficient and $G=g(X)$ is a representation satisfying
\[
\sigma(H)\subseteq\sigma(G)
\qquad \text{mod }P_X,
\]
then $G$ is also Bayes-sufficient, because
\[
\sigma(Q_{\ell,P})\subseteq\sigma(H)\subseteq\sigma(G).
\]
Thus sufficiency is preserved under refinement.

Minimality imposes equality with the quotient. If $H$ is Bayes-minimal and $V=v(X)$ satisfies
\[
\sigma(V)\not\subseteq\sigma(Q_{\ell,P})
\qquad \text{mod }P_X,
\]
then the augmented representation $\widetilde H=(H,V)$ remains Bayes-sufficient and becomes nonminimal. Indeed, $\widetilde H$ contains $H$ and therefore contains the Bayes quotient, and it also contains information outside the quotient. This is the Bayes-quotient version of a broader factorization phenomenon. Unused representation information may be retained while predictive capability is preserved, so minimality is an additional information requirement beyond sufficiency.
\end{remark}

Remark~\ref{rem:refinement-app} describes the order structure of Bayes-sufficient representations. Bayes risk distinguishes representations that fail to contain the Bayes quotient from those that contain it; by itself, it leaves the minimal representative among sufficient refinements unspecified.

\subsection{Quotients from uniquely elicited conditional properties}
\label{app:elicitation}

The unique-action theorem applies directly to losses whose Bayes action is a uniquely elicited functional of the conditional law. This connects the representation quotient to classical elicitation theory. A loss determines which property of the predictive distribution is optimal to report, and the Bayes quotient identifies input points with the same value of that property.

Let $\mathcal P_0\subseteq\mathcal P(\mathcal Y)$ be a class of probability measures on $\mathcal Y$. Let $(\mathcal T,\mathcal A_{\mathcal T})$ be a standard Borel space, and let
\[
T:\mathcal P_0\to\mathcal T
\]
be a measurable property. For simplicity, assume the action space is $\mathsf A=\mathcal T$. A loss $\ell:\mathcal T\times\mathcal Y\to[0,\infty]$ \emph{uniquely elicits} $T$ on $\mathcal P_0$ if, for every $\mu\in\mathcal P_0$,
\[
\arg\min_{t\in\mathcal T}\int \ell(t,y)\,\mu(dy)=\{T(\mu)\}.
\]

\begin{theorem}[Uniquely elicited conditional properties induce Bayes quotients]
\label{thm:elicited-quotient-app}
Assume that $\mu_X:=P(Y\in\cdot\mid X)$ belongs to $\mathcal P_0$ almost surely and that the random variable
\[
T(\mu_X)
\]
is measurable. If $\ell$ uniquely elicits $T$ on $\mathcal P_0$, then the Bayes action is unique almost surely and is given by
\[
a^\star_{\ell,P}(X)=T(P(Y\in\cdot\mid X)).
\]
Consequently, a representation $H=h(X)$ is Bayes-sufficient for $(P,\ell)$ if and only if
\[
\sigma\bigl(T(P(Y\in\cdot\mid X))\bigr)
\subseteq
\sigma(H)
\qquad \text{mod }P_X.
\]
Moreover, $H$ is Bayes-minimal if and only if
\[
\sigma(H)=\sigma\bigl(T(P(Y\in\cdot\mid X))\bigr)
\qquad \text{mod }P_X.
\]
\end{theorem}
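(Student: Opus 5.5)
The plan is to reduce everything to Theorem~\ref{thm:unique-quotient-app} by showing that the hypotheses force the unique-action setting with $a^\star_{\ell,P}(x):=T(\mu_x)$. The proof will have three steps: fix the candidate Bayes action, verify uniqueness pointwise, and then read off the sufficiency and minimality statements.

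\textbf{Fixing the candidate action.} Fix the version $x\mapsto\mu_x$ of the regular conditional law used to define $L_{\ell,P}$ and $\Gamma_{\ell,P}$. Let
\[
E:=\{x:\mu_x\in\mathcal P_0\},
\]
which by hypothesis has full $P_X$-measure, possibly after enlarging to a measurable full-measure subset. Then define
\[
a^\star_{\ell,P}(x):=T(\mu_x)
\]
on $E$ and an arbitrary fixed action $t_0\in\mathcal T$ off $E$. The assumed measurability of $T(\mu_X)$ is what makes $a^\star_{\ell,P}$ a measurable map $\mathcal X\to\mathsf A=\mathcal T$. If that assumption only gives a $P_X$-measurable version, I would replace it by a Borel version that agrees almost surely. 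This bookkeeping about versions is the only point I expect to need care.

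\textbf{Uniqueness of the Bayes action.} For each $x\in E$, the conditional risk is
\[
L_{\ell,P}(t\mid x)=\int\ell(t,y)\,\mu_x(dy).
\]
Unique elicitation applied to $\mu=\mu_x\in\mathcal P_0$ gives $\Gamma_{\ell,P}(x)=\{T(\mu_x)\}=\{a^\star_{\ell,P}(x)\}$. Hence $\Gamma_{\ell,P}(X)=\{a^\star_{\ell,P}(X)\}$ $P_X$-almost surely, so the Bayes action is unique almost surely and is represented by $a^\star_{\ell,P}$. In particular, $\Gamma_{\ell,P}$ is nonempty almost surely. Its graph agrees, up to a null set of $x$, with the graph of the measurable map $a^\star_{\ell,P}$, which is measurable because $\mathsf A$ is standard Borel. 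Moreover $\mathsf{Opt}_{\ell,P}$ contains $a^\star_{\ell,P}$. So the standing assumptions of Theorem~\ref{thm:factorization-app} and Theorem~\ref{thm:unique-quotient-app} hold.

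\textbf{Sufficiency and minimality.} Theorem~\ref{thm:unique-quotient-app}, via the equivalence of (1) and (4), now gives: $H$ is Bayes-sufficient if and only if
\[
\sigma\bigl(a^\star_{\ell,P}(X)\bigr)=\sigma\bigl(T(\mu_X)\bigr)\subseteq\sigma(H)
\quad\text{mod }P_X.
\]
For minimality, Definition~\ref{def:bayes-minimality-app} together with the almost-sure identity $a^\star_{\ell,P}(X)=T(\mu_X)$ yields equality of sigma-algebras modulo $P_X$. Finally, a different version of $\mu_x$ agrees with this one for almost every $x$, so none of these conclusions depends on the version chosen.
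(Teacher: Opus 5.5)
Your proposal is correct and follows essentially the same route as the paper's proof. On the full-measure set where $\mu_x\in\mathcal P_0$, unique elicitation gives $\Gamma_{\ell,P}(x)=\{T(\mu_x)\}$, the measurability hypothesis makes $T(\mu_X)$ a measurable unique Bayes action, and the sufficiency and minimality claims then follow from Theorem~\ref{thm:unique-quotient-app} and Definition~\ref{def:bayes-minimality-app}. Your extra bookkeeping (nonemptiness of $\mathsf{Opt}_{\ell,P}$, graph measurability, version independence) is sound, with one wording fix: you should \emph{restrict} to a measurable full-measure subset of $E$ rather than ``enlarge'' $E$, since $T$ is only defined on $\mathcal P_0$.
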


\begin{proof}
For each $x$ such that $\mu_x\in\mathcal P_0$, the conditional risk is
\[
L_{\ell,P}(t\mid x)=\int \ell(t,y)\,\mu_x(dy).
\]
Since $\ell$ uniquely elicits $T$ on $\mathcal P_0$, the unique minimizer of this conditional risk is $T(\mu_x)$. Thus
\[
\Gamma_{\ell,P}(x)=\{T(\mu_x)\}
\]
for $P_X$-almost every $x$. The assumed measurability of $T(\mu_X)$ gives a measurable unique Bayes action
\[
a^\star_{\ell,P}(X)=T(\mu_X)=T(P(Y\in\cdot\mid X)).
\]
The sufficiency and minimality characterizations now follow immediately from Theorem~\ref{thm:unique-quotient-app} and Definition~\ref{def:bayes-minimality-app}.
\end{proof}

Theorem~\ref{thm:elicited-quotient-app} gives a general recipe. Once a loss uniquely elicits a conditional property, the representation target is the input quotient generated by that conditional property. The quotient arises inside the supervised problem as the representation-level form of the Bayes act elicited by the loss.

\subsection{Standard supervised losses}
\label{app:standard-losses}

The preceding theorem recovers the usual supervised examples. The examples also show why the representation target is loss-dependent. Different losses applied to the same joint law may require different functions of the conditional distribution.

\begin{corollary}[Standard losses instantiate the quotient framework]
\label{cor:standard-losses-app}
The following losses admit Bayes quotients in the sense of Theorem~\ref{thm:unique-quotient-app} or Theorem~\ref{thm:elicited-quotient-app}.
\begin{enumerate}
    \item \textbf{Zero-one classification.} Suppose $\mathcal Y=\{1,\dots,K\}$, $\mathsf A=\mathcal Y$, and
    \[
    \ell_{0/1}(a,y)=\mathbf 1\{a\neq y\}.
    \]
    If the conditional class probabilities have a unique maximizer almost surely, then
    \[
    a^\star_{0/1,P}(X)=\arg\max_{k\in\{1,\dots,K\}}P(Y=k\mid X),
    \]
    and the Bayes quotient is generated by the Bayes class. In the binary case, with $\eta(X)=P(Y=1\mid X)$ and $P_X(\eta(X)=1/2)=0$,
    \[
    \sigma(Q_{0/1,P})=\sigma\bigl(\mathbf 1\{\eta(X)>1/2\}\bigr).
    \]

    \item \textbf{Squared loss.} Suppose $\mathcal Y=\mathsf A=\mathbb R^d$, $\ell(a,y)=\|a-y\|_2^2$, and $\mathbb E\|Y\|_2^2<\infty$. Then
    \[
    a^\star_{\mathrm{sq},P}(X)=\mathbb E[Y\mid X],
    \]
    and the Bayes quotient is generated by the conditional mean.

    \item \textbf{Binary Brier loss.} Suppose $\mathcal Y=\{0,1\}$, $\mathsf A=[0,1]$, and
    \[
    \ell_{\mathrm{Brier}}(p,y)=(p-y)^2.
    \]
    Then the unique Bayes action is
    \[
    p^\star(X)=P(Y=1\mid X),
    \]
    and the Bayes quotient is generated by the conditional probability $\eta(X)=P(Y=1\mid X)$.

    \item \textbf{Finite-label log loss.} Suppose $\mathcal Y=\{1,\dots,K\}$ and the action space is the probability simplex $\Delta_K$. Let
    \[
    \ell_{\log}(p,y)=-\log p_y,
    \]
    with $-\log 0=+\infty$. Then the unique Bayes action is the conditional probability vector
    \[
    \pi_X=(P(Y=1\mid X),\dots,P(Y=K\mid X)),
    \]
    and the Bayes quotient is generated by $P(Y\mid X)$. Thus log-loss Bayes sufficiency coincides with conditional-law sufficiency in the finite-label setting.

    \item \textbf{Strictly proper scoring rules.} More generally, let the action be a predictive distribution and let $S(q,y)$ be a strictly proper scoring rule on a class $\mathcal P_0$ containing $P(Y\in\cdot\mid X)$ almost surely. Under the usual measurability and integrability conditions ensuring that conditional expected scores are well-defined, the unique Bayes action is
    \[
    q^\star_X=P(Y\in\cdot\mid X),
    \]
    and the Bayes quotient is generated by the conditional law.
\end{enumerate}
\end{corollary}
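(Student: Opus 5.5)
The plan is to reduce every item to Theorem~\ref{thm:elicited-quotient-app}, or for zero-one loss directly to Theorem~\ref{thm:unique-quotient-app}. For each loss I will fix $\mu_x=P(Y\in\cdot\mid X=x)$ and identify the class $\mathcal P_0$ on which the loss uniquely elicits the stated functional. I will then check that the resulting statistic $T(\mu_X)$ is a measurable function of $X$. Once these are verified, the unique Bayes action is $T(\mu_X)$, and the quotient claim follows from Theorem~\ref{thm:elicited-quotient-app}.

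For zero-one loss, $L(a\mid x)=1-\mu_x(\{a\})$. Hence $\Gamma(x)=\arg\max_k\mu_x(\{k\})$, which is a singleton on the full-measure set where the maximizer is unique. A measurable version of $a^\star$ is obtained by taking the smallest maximizing index, since each $x\mapsto\mu_x(\{k\})$ is measurable. In the binary case the maximizer is $\mathbf 1\{\eta(x)>1/2\}$ off the null set $\{\eta=1/2\}$, which gives the displayed sigma-algebra.

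For squared loss, expand $\int\|a-y\|^2\mu_x(dy)=\|a-m(x)\|^2+\int\|y-m(x)\|^2\mu_x(dy)$ with $m(x)=\int y\,\mu_x(dy)$. This uses that $\int\|y\|^2\mu_x(dy)<\infty$ for a.e.\ $x$, which follows from $\mathbb E\|Y\|^2<\infty$ via the tower property. The unique minimizer is then $m(x)$, a measurable version of $\mathbb E[Y\mid X]$. Binary Brier is the special case $d=1$, $Y\in\{0,1\}$: here $m=\eta\in[0,1]=\mathsf A$, and the conditional risk equals $(p-\eta)^2+\eta(1-\eta)$ with no integrability issue.

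For log loss, the conditional risk is $-\sum_k\pi_k(x)\log p_k$, which may be $+\infty$. Subtracting the entropy (finite, at most $\log K$) gives $\mathrm{KL}(\pi(x)\,\|\,p)\ge 0$ by Gibbs' inequality, with equality iff $p=\pi(x)$. Hence $\pi(x)$ is the unique minimizer, and it is measurable because each coordinate is. Care is needed with infinite values: if $p_k=0$ while $\pi_k>0$, the risk is $+\infty$ while the risk at $\pi$ is finite, so uniqueness survives. For strictly proper scoring rules, strict propriety on $\mathcal P_0$ is by definition unique elicitation of the identity property $T(\mu)=\mu$. The measurability of $x\mapsto\mu_x$ into $\mathcal P(\mathcal Y)$ with its standard Borel structure is exactly the kernel property of a regular conditional law. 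So Theorem~\ref{thm:elicited-quotient-app} applies directly under the stated integrability conditions.

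The main obstacle is bookkeeping rather than ideas: ensuring a \emph{measurable} representative $a^\star$ exists and that uniqueness holds off a $P_X$-null set. This is delicate mainly for zero-one loss, because of the tie-breaking selector, and for the extended-valued log loss. I would handle both by explicitly restricting to the full-measure set where the relevant conditions hold, and defining $a^\star$ arbitrarily but measurably (say, a constant) on its complement.
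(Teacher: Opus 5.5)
Your proposal is correct and follows essentially the same route as the paper. It uses the same conditional-risk decomposition for each loss: zero-one via $1-P(Y=a\mid X)$ and Theorem~\ref{thm:unique-quotient-app}, the bias--variance identity for squared and Brier loss, entropy plus Kullback--Leibler divergence for log loss, and strict propriety read as unique elicitation of the identity property in Theorem~\ref{thm:elicited-quotient-app}. Your extra bookkeeping (a measurable smallest-index tie-breaker, almost-sure finiteness of conditional second moments via the tower property, and measurability of $x\mapsto\mu_x$ as a kernel) simply makes explicit what the paper leaves implicit.
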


\begin{proof}
For zero-one classification, the conditional risk of action $a\in\mathcal Y$ is
\[
L_{0/1,P}(a\mid X)=P(Y\neq a\mid X)=1-P(Y=a\mid X).
\]
Minimizing this risk is equivalent to maximizing $P(Y=a\mid X)$. Under the assumed absence of ties, the maximizer is unique almost surely, so Theorem~\ref{thm:unique-quotient-app} applies. The binary expression follows by writing $\eta(X)=P(Y=1\mid X)$.

For squared loss, expand the conditional risk. For any $a\in\mathbb R^d$,
\[
\mathbb E[\|a-Y\|_2^2\mid X]
=
\|a-\mathbb E[Y\mid X]\|_2^2
+
\mathbb E[\|Y-\mathbb E[Y\mid X]\|_2^2\mid X],
\]
where the second term is independent of $a$. Hence the unique minimizer is $\mathbb E[Y\mid X]$.

For binary Brier loss, write $\eta(X)=P(Y=1\mid X)$. The conditional risk of reporting $p\in[0,1]$ is
\[
\mathbb E[(p-Y)^2\mid X]
=
(p-\eta(X))^2+\eta(X)(1-\eta(X)).
\]
Thus the unique minimizer is $p=\eta(X)$.

For finite-label log loss, write $\pi_X(k)=P(Y=k\mid X)$. The conditional risk of reporting $p\in\Delta_K$ is
\[
L_{\log,P}(p\mid X)
=
-\sum_{k=1}^K \pi_X(k)\log p_k.
\]
Using the identity
\[
-\sum_{k=1}^K \pi_X(k)\log p_k
=
-\sum_{k=1}^K \pi_X(k)\log \pi_X(k)
+
D_{\mathrm{KL}}(\pi_X\|p),
\]
with the standard conventions for zero probabilities, this risk is uniquely minimized at $p=\pi_X$. Hence the Bayes action is the conditional probability vector, and the quotient is generated by $P(Y\mid X)$.

For strictly proper scoring rules, strict propriety means that for every conditional law $\mu\in\mathcal P_0$, the expected score $q\mapsto\int S(q,y)\,\mu(dy)$ is uniquely minimized at $q=\mu$. Applying Theorem~\ref{thm:elicited-quotient-app} with $T$ equal to the identity property on $\mathcal P_0$ gives the stated quotient.
\end{proof}

The corollary highlights the main conceptual consequence. Each loss selects its own predictive property of the conditional distribution. Log loss and strictly proper scoring rules select the conditional law, zero-one loss selects the Bayes class, squared loss selects the conditional mean, and binary Brier loss selects the conditional probability. Thus the minimal predictive representation is determined jointly by the distribution and the loss.

\section{Reproducibility}
The experiment code is organized as a resumable staged pipeline. The stages are dataset preparation, encoder training, frozen-feature extraction, probe training, statistical analysis, figure generation, and archiving. The exact taxonomic subset is fixed by a recorded random seed and saved as train/validation/test record files, together with the species--genus--family mapping. The run analyzed here uses five seeds, $\{1001,\ldots,1005\}$, ImageNet-pretrained ResNet-18 backbones, SGD with momentum 0.9, weight decay $10^{-4}$, batch size 128, cosine learning-rate decay, and early stopping on validation cross-entropy. Probe heads are linear classifiers trained on cached frozen features with standardized coordinates. Reported confidence intervals are computed by bootstrap resampling over paired seeds, and contrasts use paired sign randomization/permutation tests. 
\end{document}